\documentclass[sn-mathphys,Numbered]{sn-jnl}


\usepackage{graphicx}
\usepackage{array}  
\usepackage{multirow}
\usepackage{amsmath,amssymb,amsfonts}
\usepackage{amsthm}
\usepackage{bbm} 
\usepackage{mathrsfs} 
\usepackage[title]{appendix}
\usepackage{xcolor}
\usepackage{textcomp}
\usepackage{manyfoot} 
\usepackage{booktabs}
\usepackage{longtable} 
\usepackage{algorithm}
\usepackage{algorithmicx} 
\usepackage{algpseudocode} 
\usepackage{subcaption}
\usepackage{listings}
\usepackage{tikz} 
\hyphenpenalty = 10000


\theoremstyle{thmstyleone}%
\newtheorem{theorem}{Theorem}
%

\theoremstyle{thmstyletwo}%

\theoremstyle{thmstylethree}%

\raggedbottom

\begin{document}

\title[Article Title]{Continuous Sweep for Binary Quantification Learning}


\author*[1]{\fnm{Kevin} \sur{Kloos}}\email{k.kloos@fsw.leidenuniv.nl}

\author[1]{\fnm{Julian D.} \sur{Karch}}\email{j.d.karch@fsw.leidenuniv.nl}

\author[2]{\fnm{Quinten A.} \sur{Meertens}}\email{q.a.meertens@uva.nl}

\author[3]{\fnm{Sander} \sur{Scholtus}}\email{s.scholtus@cbs.nl}

\author[1]{\fnm{Mark} \sur{de Rooij}}\email{rooijm@fsw.leidenuniv.nl}

\affil[1]{\orgdiv{Faculty of Social and Behavioural Sciences, Institute of Psychology, Methodology and Statistics Unit}, \orgname{Leiden University}, \orgaddress{\street{Wassenaarseweg 52}, \city{Leiden}, \postcode{2233 AK}, \country{The Netherlands}}}

\affil[2]{\orgdiv{Amsterdam School of Economics, Center for Nonlinear Dynamics in Economics and Finance}, \orgname{University of Amsterdam}, \orgaddress{\street{Roetersstraat 11}, \city{Amsterdam}, \postcode{1018 WB}, \country{The Netherlands}}}

\affil[3]{\orgdiv{Department of Methodology}, \orgname{Statistics Netherlands}, \orgaddress{\street{Henri Faasdreef 312}, \city{Den Haag}, \postcode{2492 JP}, \country{The Netherlands}}}


\abstract{A quantifier is a supervised machine learning algorithm, focused on estimating the class prevalence in a dataset rather than labeling its individual observations. We introduce Continuous Sweep, a new parametric binary quantifier inspired by the well-performing Median Sweep, which is an ensemble method based on Adjusted Count estimators. We modified two aspects of Median Sweep: 1) using parametric class distributions instead of empirical distributions for the true and false positive rate; 2) using the mean instead of the median of a set of Adjusted Count estimates. These two modifications allow for a theoretical analysis of the bias and variance of Continuous Sweep. Furthermore, the expressions of bias and variance can be used to define optimal decision boundaries of the set of Adjusted count estimates to be used in the ensemble. We show in three simulation studies that Continuous Sweep outperforms the quantifiers in the group \emph{Classify, Count, and Correct}, including Median Sweep, and is competitive with the two best quantifiers from the group \emph{Distribution Matchers}. Also an empirical data set is analysed with these quantifiers showing similar performances. }

\keywords{quantification learning, class prevalence estimation, prior probability estimation, misclassification bias, base rate}

\pacs[MSC Classification]{68U99}

\maketitle

\section{Introduction} \label{sec:introduction}
\noindent Quantification Learning is a machine learning task with the goal of estimating the prevalence of classes in a test set \cite{gonzalez2017quantification, gonzález2017, esuli2023learning}. This task shares similarities with classification where the training data consists of a sample of observations, each with a set of predictor variables denoted by $X$ and a single categorical target variable denoted by $y$. In the test set, only the predictor variables $X$ are available, and the target variable $y$ is unknown. In classification, the training data is used to find a classification rule that aims to predict each $y$ as accurately as possible. In contrast, quantification learning focuses on predicting the distribution of the target variable $y$ in the test set, rather than focusing on the individual labels. Since the target $y$ is a categorical variable, the distribution of $y$ is characterized by the prevalences of the classes. In this paper, we focus on binary target variables. 
 
An example of binary quantification learning is the estimation of the proportion of rooftops equipped with solar panel installations \cite{curier2018monitoring}. Satellites capture the rooftops, which are then processed into a dataset of images, where the pixels in the image serve as predictor variables. The target variable is binary, indicating the presence or absence of a solar panel on each rooftop. The training data set is a set of rooftop images ($X$), each manually verified for the presence or absence of a solar panel ($y$). For the test set, a manual verification of the rooftops is not available. With the training set, a model that predicts the distribution of solar panels is trained. The estimated classification model, or rule, is applied to the test set to obtain an accurate estimate of the prevalence of solar panels. In this exemplary application, such an estimate is important for the government to evaluate its policy on renewable energy, without the need to manually verify the presence of a solar panel on each building. In fact, quantification learning is useful in many disciplines. In epidemiology, for example, the prevalence of COVID-19 in the population determined how governments responded to the pandemic \cite{rath2020, lakhan2020}. In political science, polls and sentiment analysis are used to predict the prevalence of votes in presidential elections \cite{chaudhry2021}. In ecology, the prevalence of coral reef and plankton populations in oceans \cite{beijbom2015} provide essential information on the state of nature.

A naive way to estimate the prevalence in the test set is to count the predicted positive outcomes applied on the test set, better known as the Classify and Count method \cite{forman2005}. The predicted positive outcomes result from an underlying classifier that is trained on a set of training data. However, due to classification errors, only counting positive outcomes will lead to (sometimes severely) biased results. We include an example to illustrate this bias. Assume the classifier can identify a rooftop with solar panels with $98\%$ accuracy and those without solar panels with $97\%$ accuracy. Consequently, $98\%$ of the rooftops with solar panels are classified correctly, but also $3\%$ ($100\%-97\%$) of the rooftops without solar panels are misclassified as having solar panels. These errors affect prevalence estimates, even if the accuracy is high. Consider a test set from a small city with $1000$ houses, for which (in this case) it is known that $100$ have solar panels installed. Applying the classification rule and aggregating the classifications to obtain a prevalence estimate, we predict $125$ ($98\%$ of $100$ plus $3\%$ of $900$) rooftops with solar panels, which overestimates the actual prevalence count of $100$ by $25$ percent. This bias does not disappear if the number of observations in the test set increases. If a larger city has $10,\!000$ houses, out of which $1000$ have solar panels, we predict $1250$ ($98\%$ of $1000$ plus $3\%$ of $9000$) rooftops with solar panels. The bias in prevalence estimates that occurs due to misclassification errors is called \emph{misclassification bias}. 

One of the earliest treatments of misclassification bias was by Bross \cite{bross1954}, who elaborated on the impact of systematic misclassifications of disease diagnoses on chi-square tests. Another more traditional way of estimating the prevalence in the test set is by using the training data as a source of correctly labelled observations, ignoring the predictor variables. If we assume that the training and test data follow the same joint distribution, both data sets should have, apart from some sampling variance, the same prevalence \cite{dougherty2013}. A simple baseline method for estimating the prevalence in the test data, would be to compute the prevalence in the training data. Computing the prevalence in the training data is a trivial task by aggregating the known labels. Often, however, the training data is small and therefore the baseline method has a large variance. Moreover, the training and test set might come from different distributions and then the baseline method gives a biased prevalence estimate. 

In quantification learning, the prevalences of the training and test data are allowed to differ, but it is assumed that the relationship between the target variable and predictors is equal in the training and test set. This assumption is called \textit{prior probability shift} \cite{moreno2012}. An example where prior probability shift plays a role is the case where a classifier for solar panels is trained on data from a city with a prevalence of $10 \%$ and the resulting classification rule is applied on test data from other cities with possibly different prevalences. In this example, the same type of images and the same classifier are used, such that the $X \rightarrow y$ relationship remains the same. Prior probability shift causes the training set not to be representative for the test set, and as a result the prevalence in the training set cannot be used as an unbiased prevalence estimate for the test set. Consequently, more sophisticated quantification methods are needed to accurately estimate the prevalence of a test set. Forman \cite{forman2005} was the first to address the quantification task in the context of machine learning. He proposed several quantification techniques to estimate the type of support needed at a customer service desk (target variable) using technical support logs (predictor variables) \cite{forman2006}. Since then, different types of quantifiers have been developed.

Quantifiers can be categorized into three groups \cite{gonzález2017}. The first group, \emph{Classify, Count and Correct}, contains quantifiers that correct biased counts from the Classify and Count method by using estimates of the misclassification rates from the training data. The second group, \emph{Direct Learners}, contains quantifiers that adapt traditional classification algorithms for quantification tasks, like adapted decision trees (quantification trees) \cite{milli2013}, adapted nearest-neighbors algorithms \cite{barranquero2013}, and adapted support vector machines \cite{esuli2015}. The third group, \emph{Distribution Matching}, contains quantifiers which are primarily focused on tuning the mixture between marginal class distributions to obtain the best estimate of the full distribution. Examples of this third group are the Saerens-Latinne-Decaesteker algorithm (SLD) \cite{saerens2002}, Distribution y-Similarity (DyS) \cite{maletzke2019} and King \& Lu’s algorithm \cite{king2008}. In this paper, we will primarily focus on quantifiers that belong to the first group.

Forman elaborated on three quantifiers within the group of Classify, Count and Correct \cite{forman2008}. First, Classify and Count is the simplest quantifier that counts and averages the estimated labels, which generally results in biased prevalence estimates. Second, Adjusted Count corrects the Classify and Count estimate by removing the bias using estimated true and false positive rates of the training data. A similar correction was also proposed earlier by Kuha and Skinner \cite{kuha1997}. Third, Median Sweep is an ensemble method that computes the median of a set of Adjusted Count estimates \cite{forman2008}. See Section \ref{sec:prev} for a more detailed treatment. 

Recently, Schumacher et al. \cite{schumacher2021} compared 24 quantifiers from all three groups in an extensive simulation study. The results show that, for binary quantification problems, Median Sweep is the best quantifier from the group Classify, Count and Correct. Furthermore, Median Sweep performs better than most quantifiers from the other two groups. The DyS algorithm \cite{maletzke2019} is competitive to the Median Sweep, for some data sets the Median Sweep outperforms the DyS, for other data sets the DyS outperforms Median Sweep. In the recent LeQua2022 competition, a competition designed for quantification tasks, the Distribution Matching algorithms DyS and SLD performed particularly well \cite{esuli2022detailed}. Unfortunately, the Median Sweep was not included in LeQua2022. Also in \cite{esuli2023learning}, it was found that DyS and SLD perform well.

In their review of quantification learning, \cite{gonzález2017} suggest that ``more solid theoretical analyses are needed to better understand not only the behavior of these algorithms but also the learning problem in general''. Since then, several key theoretical contributions have been made in the field of quantification learning, including derivations of analytic expressions for bias and variance for various Adjusted Count quantifiers \cite{kloos2021, kloos2021a, tasche2017, tasche2021, meertens2021}. Hitherto, however, no theoretical results have been derived for  Median Sweep. Two elements of Median Sweep complicate a theoretical analysis. First, Median Sweep uses empirical cumulative density functions, that is, step functions, to estimate the true and false positive rate. Second, Median Sweep computes the median of a set of adjusted count estimates as the final estimate. Our aim is to make a contribution to the theoretical results, by slightly adjusting, and preferably improving, the construction of Median Sweep.


To these ends, we present a new, binary, parametric quantifier inspired by Median Sweep, and call it Continuous Sweep. We modify two aspects of Median Sweep. First, Continuous Sweep estimates the true and false positive rate using parametric distributions instead of the empirical cumulative density function. Second, Continuous Sweep uses the mean instead of the median, to compute its final prevalence estimate. Changing these two elements enables us to perform a theoretical analysis of Continuous Sweep and derive analytical expressions for its bias and variance.

Continuous Sweep has been constructed in such a way that analytic expressions for its bias and variance can be derived without losing the key properties of Median Sweep, that is, Continuous Sweep remains an ensemble quantifier in the group \emph{Classify, Count, and Correct} that aggregates a range of adjusted count estimates. In addition, we use the analytic expressions for bias and variance to further improve the quantifier. Most importantly, whereas Median Sweep uses a heuristic value to determine which set of Adjusted Count estimates is used in the final estimation, from our analytic expressions we derive an optimal value that minimizes the variance of the estimate. 

After introducing Continuous Sweep and deriving the expressions for bias and variance, we compare the performance of Continuous Sweep to other quantifiers. Using a simulation study and an application, we compare the performance of Continuous Sweep with Median Sweep, that is, the best quantifiers from the group \emph{Classify, Count, and Correct}, and with the best quantifiers from the group \emph{Distribution Matching}, namely DyS and SLD. We will show that Continuous Sweep generally outperforms quantifiers from its own group, while it is competitive with the distribution matching quantifiers. 

The remainder of the paper is organized as follows. Section $2$ revisits previous work in the \emph{Classify, Count, and Correct} quantifiers and introduces notation that will be used throughout the paper. In Section $3$, we introduce Continuous Sweep and provide analytic expressions for its bias and variance. Furthermore, we show how these expressions can be used to find an optimal set of adjusted count estimates. In Section $4$, we compare Continuous Sweep with other quantifiers in three simulation studies. In Section $5$, we compare Continuous Sweep with other quantifiers on the LeQua2022 competition data. We conclude the paper in Section $6$ with an overview of the main results and some discussion topics.

\section{Previous Work in Classify, Count, and Correct Quantifiers} \label{sec:prev}
\noindent This section provides an overview of the relevant literature on quantification learning, specifically focusing on the first group of \textit{Classify, Count, and Correct} methods. We reiterate the definitions of the quantifiers Classify and Count, Adjusted Count, and Median Sweep, including the bias and variance of the first two. Furthermore, we introduce the notation that will be used in the remainder of this paper.

Binary quantification learning is a semi-supervised machine learning task that aims to estimate the prevalence of positive labeled observations $\alpha$ in an unlabeled data set $D_\text{test}$, where the unlabeled data set $D_\text{test}$ is used for a quantifier that is trained with a labeled data set $D_\text{train}$ \cite{gonzález2017}. In binary quantification tasks, each observation $i$ in $D_\text{train}$ consists of a feature vector $x_i \in \mathbb{R}^p$ and a class label $y_i \in \{-1, +1 \}$, while observations in $D_\text{test}$ only consist of an p-dimensional feature vector $x_i \in \mathbb{R}^p$. The missing labels in $D_\text{test}$ are predicted using a discriminant function $\hat\delta(x_i)$ of a classifier that is trained on training set $D_\text{train}$. The discriminant function maps the feature vector $x_i$ to a real-valued discriminant score: $\hat\delta(x_i): \mathbb{R}^p \rightarrow \mathbb{R}$. Subsequently, the quantifier labels observation $i$ in $D_\text{test}$ either positively or negatively, depending on whether $\hat\delta(x_i)$ is larger than a threshold $\theta$, that is 
\begin{align}
    \hat y_i = 
    \begin{cases} 
        +1 & \text{if $\hat\delta(x_i) \geq \theta$} \\
        -1 & \text{if $\hat\delta(x_i) < \theta$}.
    \end{cases}
\end{align}
The most straightforward approach to estimate the prevalence of positives in $D_\text{test}$ is to count the number of observations assigned to the positive class. In the literature, this approach is called \emph{Classify and Count} \cite{forman2005}. Given that $D_\text{test}$ is of size $n_\text{test}$, the prevalence as estimated by the quantifier \emph{Classify and Count}, denoted $\hat\alpha_\text{CC}$, is defined as

\begin{align} \label{eq:cac}
    \hat\alpha_\text{CC} &= \frac{1}{n_\text{test}} \sum_{i = 1}^{n_\text{test}} \mathbb{I}\left\{ \hat y_i = +1 \right\}, 
\end{align}
where $\mathbb{I}$ denotes the indicator function. Generally, a soft classifier uses a threshold probability of $0.5$ to classify an observation as positive, however, other thresholds could also be used.

The expected value of Classify and Count is determined by adding the number of correctly specified positives (true positives) and the incorrectly specified negatives (false positives) as

\begin{align}
    E\left[ \hat\alpha_\text{CC} \right] &= E \left[ \frac{1}{n_\text{test}} \sum_{i = 1}^{n_\text{test}} \mathbb{I}\left\{ \hat y_i = +1 \right\} \right] \nonumber \\ 
    &= \alpha \cdot p^+ + (1- \alpha) \cdot (1 - p^-) \nonumber \\
    &=  \alpha \cdot (p^+ + p^- - 1) + 1 - p^-, \label{eq:ev-cac}
\end{align}
where $p^+$ and $p^-$ denote the true positive rate $P(\hat y_i = +1 \mid y_i = +1)$ and the true negative rate $P(\hat y_i = -1 \mid y_i = -1)$, respectively \cite{kuha1997, scholtus_accuracy_2020}. The bias of Classify and Count is the difference between the expected value and the true value $\alpha$, that is,

\begin{align}
    B\left[ \hat\alpha_\text{CC} \right] &=  E\left[ \hat\alpha_\text{CC} \right] - \alpha \nonumber \\
    &=  \alpha \cdot (p^+ + p^- - 2) + 1 - p^-. \label{eq:bias-cac}
\end{align}
We see that Classify and Count potentially has a large bias caused by the difference between false positives and false negatives \cite{gonzalez2017quantification}. 

The variance of Classify and Count is heavily dependent on the size of the test set \cite{scholtus_accuracy_2020}, and is denoted as

\begin{align}
    V\left[ \hat\alpha_\text{CC} \right] &= \frac{\alpha p^+ (1 - p^+) + (1-\alpha) p^- (1-p^-)}{n_{\text{test}}}.
\end{align}


Eqs. \eqref{eq:ev-cac} and \eqref{eq:bias-cac} show mathematically that Classify and Count is rarely unbiased. Given a constant $p^+$ and $p^-$, Classify and Count is only unbiased for a single value $\alpha_0 = \frac{1-p^-}{2-p^+-p^-}$ , but is biased for all other values of $\alpha$ \cite{gonzalez2017quantification}. Therefore, it can be concluded that the Classify and Count quantifier cannot be used reliably under a moderate to large prior probability shift and that the bias caused by misclassifications needs to be corrected. 

\emph{Adjusted Count} corrects the Classify and Count quantifier using estimated true positive and negative rates $\hat p^+$ and $\hat p^-$ from $D_\text{train}$ by rewriting Eq. \eqref{eq:ev-cac} in terms of $\hat\alpha_\text{CC}$ as

\begin{align}
    E\left[ \hat\alpha_\text{CC} \right] &= \alpha \cdot (p^+ + p^- - 1) + 1 - p^- \nonumber \\
    \alpha \cdot (p^+ + p^- - 1) &= E\left[ \hat\alpha_\text{CC} \right] - (1 - p^-) \nonumber \\
    \alpha &= \frac{E\left[ \hat\alpha_\text{CC} \right] - (1 - p^-)}{p^+ - (1 - p^-)}. \label{eq:ev-cac-rewritten}
\end{align}
Because we assume only prior probability shift, the distribution of $p^+$ and $p^-$ in $D_\text{train}$ and $D_\text{test}$ are identical and, as a result, the estimated rates $\hat p^+$ and $\hat p^-$ can be used as plug-in estimators to estimate the prevalence $\alpha$ in an unbiased way. Moreover, the parameters $\hat p^+$, $\hat p^-$, and $\hat\alpha_\text{CC}$ all depend on the threshold $\theta$. A change in $\theta$ can result in a different value for $\hat\alpha_\text{CC}$ because more observations in $D_\text{test}$ pass the decision rule $\hat\delta(x_i) \geq \theta$. Rewriting Eq. \eqref{eq:cac} more precisely results in

\begin{align}
    \hat \alpha_\text{CC}(\theta, D_\text{test}) &= \frac{1}{n_\text{test}} \sum_{i = 1}^{n_\text{test}} \mathbb{I}\left\{ \hat\delta(x_i) \geq \theta \right\}. \label{eq:cac-theta} 
\end{align}
The training data $D_\text{train}$ are used to estimate the true positive rate $p^+$ and the true negative rate $p^-$ for a given threshold $\theta$. The true positive rate can be estimated by the proportion of observations with $y = 1$ that pass the decision rule $\hat\delta(x_i) \geq \theta$ in $D_\text{train}$, and the true negative rate can be estimated by the proportion of observations with $y = -1$ that do not pass the decision rule in $D_\text{train}$. In mathematical notation, this boils down to

\begin{align}
    \hat p^+(\theta, D_\text{train}) &= \frac{\sum_{i=1}^{n_\text{train}} \mathbb{I}\left\{ y_i = +1 \right\} \cdot \mathbb{I}\left\{ \hat\delta(x_i) \geq \theta \right\}}{\sum_{i=1}^{n_\text{train}} \mathbb{I}\left\{ y_i = +1 \right\}}, \label{eq:tpr-theta} \\
    \hat p^-(\theta, D_\text{train}) &= \frac{\sum_{i=1}^{n_\text{train}} \mathbb{I}\left\{ y_i = -1 \right\} \cdot \mathbb{I}\left\{ \hat\delta(x_i) < \theta \right\}}{\sum_{i=1}^{n_\text{train}} \mathbb{I}\left\{ y_i = -1 \right\}} \label{eq:tnr-theta},
\end{align}
where $n_\text{train}$ denotes the sample size of the training set $D_\text{train}$. By combining Eqs. \eqref{eq:cac-theta}, \eqref{eq:tpr-theta} and \eqref{eq:tnr-theta}, Eq. \eqref{eq:ev-cac-rewritten} can be rewritten in terms of $\hat\alpha_\text{CC}$ as

\begin{align}
    \hat\alpha_\text{AC}(\theta, D_\text{train}, D_\text{test}) &= \frac{\hat\alpha_\text{CC}(\theta, D_\text{test}) - (1 - \hat p^-(\theta, D_\text{train}))}{\hat p^+(\theta, D_\text{train}) - (1 - \hat p^-(\theta, D_\text{train}))}. \label{eq:ac-theta}
\end{align}
The Adjusted Count is an unbiased prevalence estimate \cite{buonaccorsi_measurement_2010}. The variance of Adjusted count, as derived by \cite{kloos2021, kuha1997}, is

\begin{align}
    V\left[ \hat\alpha_\text{AC} \right] &= \frac{\alpha p^+ (1-p^+) + (1-\alpha)p^-(1-p^-)}{n_\text{test} (p^+ + p^- - 1)^2}. \label{eq:var-ac}
\end{align}
From Eq. \eqref{eq:var-ac}, we observe that the Adjusted Count quantifier has a large variance if $p^+ + p^-$ is close to $1$. This happens when the values of the true and false positive rates are close to each other. Therefore, a sophisticated choice of $\theta$ can decrease the variance of the adjusted count quantifier. Many quantification algorithms use the Adjusted Count quantifier and differ only by their choice of $\theta$. For example, the quantification method \emph{MAX} chooses $\theta$ such that the difference between $\hat p^+$ and $\hat p^-$ is the largest and method \emph{T50} chooses $\theta$ where $\hat p^+ = 0.50$ \cite{forman2008}. Similar Adjusted Count quantifiers exist in the field of survey methodology. In the 1990s, Kuha and Skinner \cite{kuha1997} provided an overview of the most relevant correction methods, which were later extended by \cite{kloos2021, meertens2021, kloos2021a}. 

The \textit{Median Sweep} quantifier is an ensemble technique that starts with Adjusted Count estimates (i.e., using Eq. \eqref{eq:ac-theta}) for each $\theta \in \left\{ \hat\delta(x): x \in D_\text{test} \right\}$
for which the difference between $\hat p^+$ and $(1 - \hat p^-)$ is larger than $\frac{1}{4}$. This results in a set of estimates using Adjusted Count where outliers (i.e., values outside these boundaries) are discarded. The final prevalence estimate is obtained as the median of this set of estimates \cite{forman2006}. More formally, from the set $X(D_\text{test}) = \left\{ \hat\delta(x): x \in D_\text{test} \right\}$, we only consider the elements of the set $\boldsymbol{\theta}_\text{MS} = \left\{ \theta \in X(D_\text{test}): \hat p^+(\theta, D_\text{train}) + \hat p^-(\theta, D_\text{train}) - 1 > \frac{1}{4} \right\}$. Using this set $\boldsymbol{\theta}_\text{MS}$, Median Sweep is 

\begin{align}
    \hat\alpha_\text{MS} = \text{median} \left( \left\{ \hat\alpha_\text{AC}(\theta, D_\text{train}, D_\text{test}): \theta \in \boldsymbol{\theta}_\text{MS} \right\} \right).
\end{align}\label{eq:ms}
From Schumacher et al. \cite{schumacher2021}, we know that Median Sweep is, on average over a variety of datasets, the best quantifier within the group of Classify, Count and Correct, and that it performs better than most quantifiers from the other two groups (Direct Learners and Distribution Matchers). However, it is unknown under which specific circumstances Median Sweep performs well. Moreover, we believe that Median Sweep could be improved even further, because of its heuristic decision rules for discarding observations where the difference between the true positive and false positive rate is smaller than $\frac{1}{4}$. The threshold of $\frac{1}{4}$ seems rather arbitrary and could potentially be optimized. With those considerations in mind, we introduce a new quantifier named Continuous Sweep.

\section{Continuous Sweep}
\noindent In this section, we introduce the Continuous Sweep quantifier starting from Median Sweep (Section 3.1). In Section 3.2, the bias, variance, and mean squared error of Continuous Sweep are derived. Last, in Section 3.3, we show how to compute the optimal threshold $p^\Delta$ that will replace the arbitrary cutoff value of $\frac{1}{4}$ used in Median Sweep.

\subsection{Constructing the Continuous Sweep Quantifier from Median Sweep}
\noindent The Median Sweep quantifier has three properties that either affect the performance of the quantifier or affect the ability to derive the bias and variance of the quantifier analytically. Median Sweep uses
\begin{enumerate}
    \item step functions for the true positive and negative rate with respect to $\theta$;
    \item the median of a set of Adjusted Count estimates;
    \item a heuristic decision rule for discarding Adjusted Count estimates.
\end{enumerate}
The step functions for the true positive and true negative rate as a function of $\theta$ can be avoided by using parametric estimates for the density of discriminant scores $\hat \delta(x_i)$. For example, if the discriminant scores are assumed to be normally distributed for each class, then $p^+(\theta)$ and $p^-(\theta)$ can be estimated using the continuous cumulative distribution functions, instead of the non-parametric estimated step functions, as illustrated in Fig. \ref{fig:example_disc_cont1}. The first step therefore is fitting a normal distribution to the discriminant scores $\hat \delta(x_i)$ in $D_\text{train}$ (Fig. \ref{fig:example_disc_cont1}). After obtaining the parameters of the fitted distribution, the densities can be integrated to obtain cumulative distribution functions \footnote{The distribution function is of the lower tail.} (Fig. \ref{fig:example_disc_cont2}). To illustrate the difference between the step functions and the parametric estimates, the non-parametric empirical cumulative distribution function is also shown in Fig. \ref{fig:example_disc_cont2}. Instead of interpreting the curves in Fig. \ref{fig:example_disc_cont2} as cumulative distribution functions, the fitted lines can also be considered estimates of the true positive rate $\hat p^+(\theta)$ and the false positive rate $1 - \hat p^-(\theta)$.

The first adjustment to Median Sweep is that the true and false positive rates are estimated using parametric estimates of the cumulative distribution. Therefore, in the equations for Adjusted Counts we need to replace 
$\hat p^+$ by $\hat F^+(\theta, D_\text{train})$ and $1 - \hat p^-$ by $\hat F^-(\theta, D_\text{train})$. In the equation for the expected values and bias, we change $p^+$ by $F^+(\theta)$ and $1 - p^-$ by $F^-(\theta)$.

Then, the equations for the expected value and the bias of the Classify and Count quantifier become

\begin{align}
    E[\hat\alpha_\text{CC}(\theta)] &= \alpha \left( F^+(\theta) - F^-(\theta) \right) + F^-(\theta) \label{eq:ecc-cont} \\
    B[\hat\alpha_\text{CC}(\theta)] &= \alpha \left( F^+(\theta) - F^-(\theta) - 1 \right) + F^-(\theta), \label{eq:bcc-cont}
\end{align}
and accordingly the equation for the Adjusted Count quantifier becomes

\begin{align}
    \hat\alpha_\text{AC}(\theta, D_\text{train}, D_\text{test})
    &= \frac{\hat\alpha_\text{CC}(\theta, D_\text{test}) - \hat F^-(\theta, D_\text{train})}{\hat F^+(\theta, D_\text{train}) - \hat F^-(\theta, D_\text{train})}. \label{eq:ac-theta-cont}
\end{align}

\begin{figure}[h!]
    \centering
    \begin{subfigure}{0.49\textwidth}
        \centering
        \includegraphics[width = \textwidth]{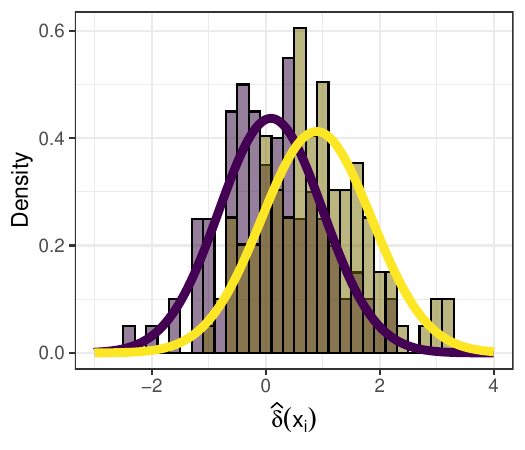}
        \caption{}
        \label{fig:example_disc_cont1}
    \end{subfigure}
    \hfill
    \begin{subfigure}{0.49\textwidth}
        \centering
        \includegraphics[width = \textwidth]{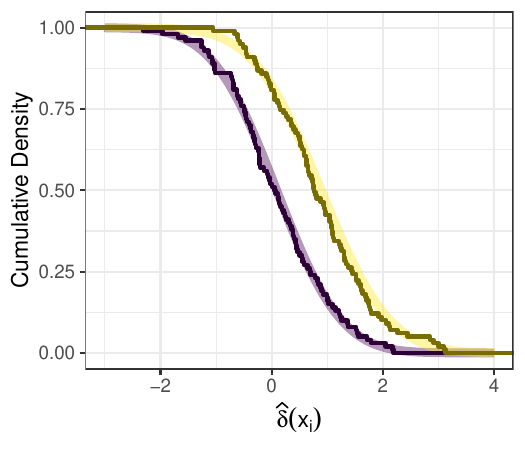}
        \caption{}
        \label{fig:example_disc_cont2}
    \end{subfigure}
    \caption{The difference between Median Sweep and Continuous Sweep in estimating the true and false positive rate. In Figure (a), the histogram shows the distributions of the discriminant scores in the negative (purple) and the positive (yellow) class, where the corresponding coloured lines show the fitted Normal distributions. In Figure (b), Median Sweep uses empirical cumulative distribution functions (i.e., step functions) while Continuous Sweep uses the Gaussian cumulative distribution (i.e., smooth curves) to estimate the true/false positive rate.}
    \label{fig:example_disc_cont}
\end{figure}
Unlike for the functions of the true positive rate and the false positive rate, the function of the Classify and Count quantifier $\hat\alpha_\text{CC}(\theta, D_\text{test})$ is preferably not estimated as a continuous, parametric curve. Despite that the approach from Kloos et. al \cite{kloos2022unileiden} uses kernel methods to estimate the distribution of the Classify and Count quantifier, we in hindsight believe that a step function is preferred over a kernel estimate. The full Classify and Count distribution is namely a mixture distribution with an unknown mixture parameter closely related to prevalence $\alpha$. Therefore, $\hat\alpha_\text{CC}(\theta, D_\text{test})$ remains a step-function for future calculations. 

As illustrated in Fig. \ref{fig:cont-sweep-lines}, $\hat\alpha_\text{AC}$ is a continuous function between any two discriminant scores in the test set $D_\text{test}$. Within such an interval, $\hat\alpha_\text{CC}$ is constant and, consequently, the graph of $\hat\alpha_\text{AC}$ is continuous. For each jump in $\hat\alpha_\text{CC}$, we move to a new interval. For extreme discriminant scores $\theta$, the estimates in $\hat\alpha_\text{AC}$ have a high variance resulting in potentially extreme prevalence estimates. This can be seen in Figure \ref{fig:cont-sweep-ac}, where the gradient of the adjusted count function is steeper for extreme values of $\theta$. Median Sweep suffers from this problem as well where some adjusted count estimates have high variance. Therefore, as described in Section 2, Median Sweep only considers prevalence estimates at thresholds equal to discriminant scores occurring in $D_\text{test}$ for which the difference between $\hat F^+(\theta, D_\text{train})$ and $\hat F^-(\theta, D_\text{train})$ is greater or equal than the cut-off value ($p^\Delta$). Forman \cite{forman2008} used a heuristic value of $p^\Delta = \frac{1}{4}$ as the general cutoff value for Median Sweep. Continuous Sweep inherits the decision rule from Median Sweep, but due to the characteristics of Continuous Sweep and its distribution functions $\hat F^+(\theta, D_\text{train})$ and $\hat F^-(\theta, D_\text{train})$, the decision rule is applied differently. Instead of including or discarding prevalence estimates at only the discriminant scores occurring in $D_\text{test}$, i.e., a discrete set, Continuous Sweep considers all prevalence estimates for every $\theta$ on an entire interval. Therefore, the thresholds $\theta$ where $p^\Delta = \frac{1}{4}$ are defined as decision boundaries $\theta_\text{l}$ (left) and $\theta_\text{r}$ (right), and all the points between the decision boundaries are included in the final prevalence estimate \footnote{For now, we assume that exactly two solutions to the equation exist. If $F^+$ and $F^-$ are normal distributions with means $\mu^+$ and $\mu^-$, this is always the case.}. To obtain the decision boundaries, we solve the equation  $F^+(\theta, D_\text{train}) - F^-(\theta, D_\text{train}) - \frac{1}{4} = 0$. In Fig. \ref{fig:cont-sweep-lines-dots}, the difference between Median Sweep and Continuous Sweep is illustrated, where Median Sweep only includes the green points in the final prevalence estimate, Continuous Sweep includes all values between the two decision boundaries. For now, we compute $\theta_\text{l}$ (left) and $\theta_\text{r}$ using the cut-off value $p^\Delta = \frac{1}{4}$, but this could be easily extended to other values between $0$ and $1$. These decision boundaries give a set of adjusted count estimates where Median Sweep takes the median of the adjusted count estimates as its final prevalence estimate.

\begin{figure}
    \centering
    \begin{subfigure}{0.49\textwidth}
        \centering
        \includegraphics[width=\textwidth]{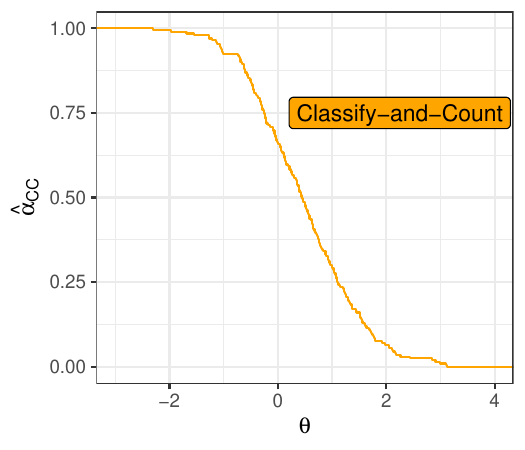}
        \caption{}
        \label{fig:cont-sweep-cac}
    \end{subfigure}
    \hfill
    \begin{subfigure}{0.49\textwidth}
        \centering
        \includegraphics[width=\textwidth]{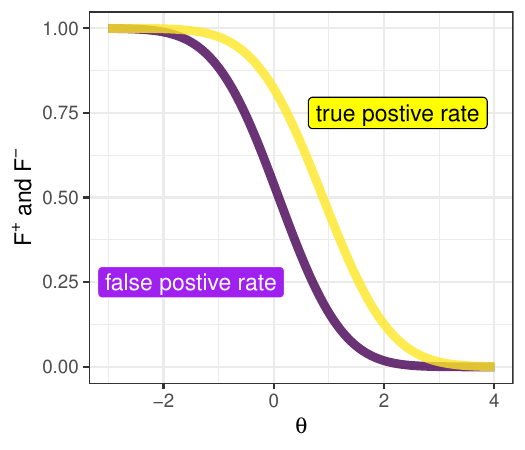}
        \caption{}
        \label{fig:cont-sweep-tpr-fpr}
    \end{subfigure}
    \hfill
    \begin{subfigure}{\textwidth}
        \centering
        \includegraphics[width = \textwidth]{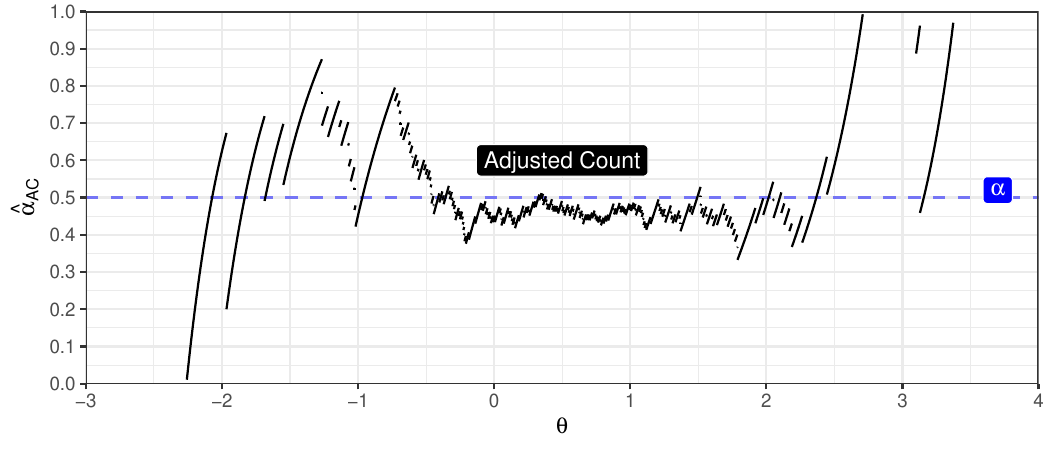}
        \caption{}
        \label{fig:cont-sweep-ac}
    \end{subfigure}
    \caption{Using a Classify and Count function as a step function (Panel (a)) and the true and false positive rate as continuous functions (Panel (b)), the Adjusted Count quantifier can estimate prevalence $\alpha$ at each $\theta$ (Panel (c)).}
    \label{fig:cont-sweep-lines}
\end{figure}

\begin{figure}
    \centering
    \includegraphics[width = \textwidth]{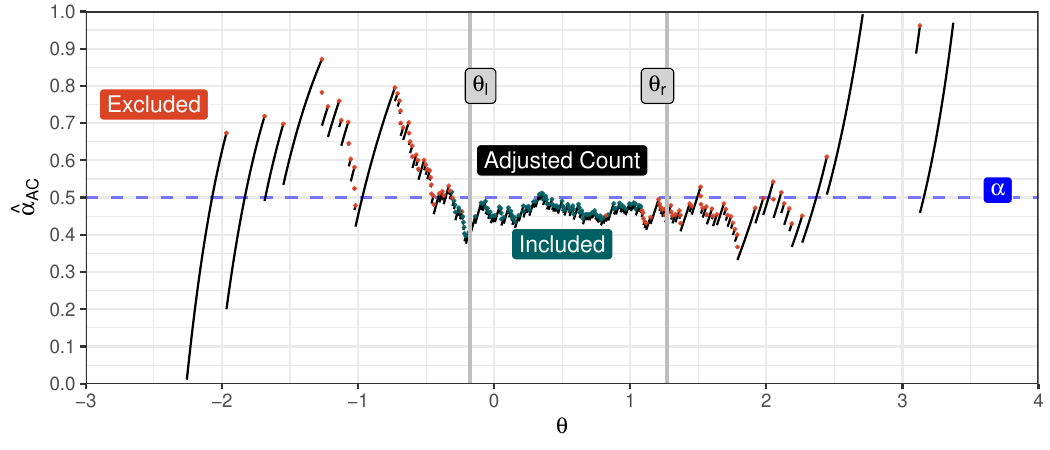}
    \caption{Example of discarding individual data points (red/green) against decision boundaries (grey, vertical). For Median Sweep, the green points are included in the prevalence calculation, while the red points are excluded. Continuous Sweep includes every $\theta$ between $\theta_\text{l}$ and $\theta_\text{r}$ in the calculations.}
    \label{fig:cont-sweep-lines-dots}
\end{figure}

The second adjustment relates to the median. The median can only be computed analytically when the cumulative distribution function of $\hat\alpha_\text{AC}(\theta, D_\text{train}, D_\text{test})$ between $\theta_\text{l}$ and $\theta_\text{r}$ is known. Because $\hat\alpha_\text{AC}(\theta, D_\text{train}, D_\text{test})$ consists of many continuous functions in this parameter space that are collectively non-bijective, it is far-fetched to obtain its cumulative distribution function. As illustrated in Fig. \ref{fig:cont-sweep-area}, computing the mean analytically requires less effort, because it only includes the area under the curve. For every line segment between $\theta_\text{l}$ and $\theta_\text{r}$, this area can be calculated by integrating Eq. \eqref{eq:ac-theta-cont} with respect to $\theta$, resulting in a set of areas. The Continuous Sweep quantifier $\hat\alpha_\text{CS}$ is obtained by summing all areas between $\theta_\text{l}$ and $\theta_\text{r}$ and dividing by the distance between $\theta_\text{l}$ and $\theta_\text{r}$ as
\begin{align}\label{eq:part-cont-sweep}
    \hat\alpha_\text{CS}(D_\text{train}, D_\text{test}) &= 
    \frac{1}{\theta_\text{r} - \theta_\text{l}} \int_{\theta_\text{l}}^{\theta_\text{r}} \frac{\hat\alpha_\text{CC}(\theta, D_\text{test}) - \hat F^-(\theta, D_\text{train})}{\hat F^+(\theta, D_\text{train}) - \hat F^-(\theta, D_\text{train})} d\theta.
\end{align}
Eq. \eqref{eq:part-cont-sweep} cannot be evaluated analytically, because the curve exists of many continuous parts. The algorithm to evaluate Eq. \eqref{eq:part-cont-sweep} is found in Algorithm \eqref{alg:cont-sweep}.
\begin{algorithm}
    \caption{Computing Continuous Sweep}
    \begin{algorithmic}[1]
        \State \textbf{Input:} training data $D_\text{train}$ test data $D_\text{test}$, difference parameter $p^\Delta$.
        \State \textbf{Output:} prevalence estimate $\hat\alpha_\text{CS}$
        \State $\theta_\text{l}, \theta_\text{r} \leftarrow$ \verb|ObtainDecisionBoundaries|($D_\text{train}, p^\Delta$)
        \State \verb|ths| $\leftarrow \verb|sort|(D_\text{test}[ D_\text{test} \geq \theta_\text{l} \text{ \& } D_\text{test} \leq \theta_\text{r}])$
        \State $N_\text{ths} \leftarrow$ \verb|length|(\verb|ths|)
        \State \verb|A| $\leftarrow$ \verb|length|(\verb|ths| $- 1$)
        \For {$i$ in $1, 2, ..., N_\text{ths} - 1$}
            \State \verb|A[i]| $\leftarrow$ \verb|Integrate|($\hat\alpha_\text{AC}(\theta, D_\text{train}, D_\text{test})$\verb|, min = ths[i], max = ths[i+1], var =| $\theta$) 
        \EndFor
        \State \verb|SumArea| $\leftarrow$ \verb|sum|(\verb|A|)
        \State \textbf{return} $\hat\alpha_\text{CS} = \frac{\text{SumArea}}{\theta_\text{r} - \theta_\text{l}}$
    \end{algorithmic}
    \label{alg:cont-sweep}
\end{algorithm}

\begin{figure}
    \centering
    \includegraphics[width = \textwidth]{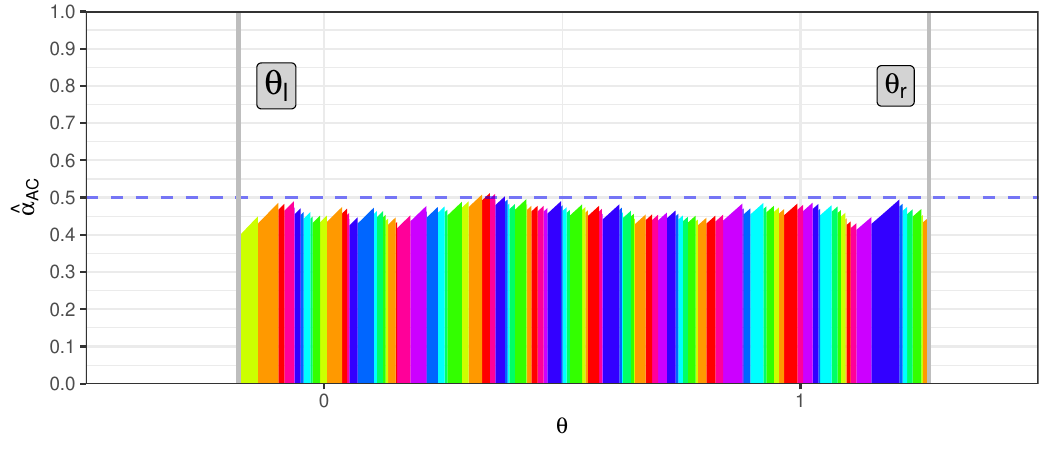}
    \caption{Example of using integrals to compute the prevalence using Continuous Sweep. The estimated prevalence is computed by summing the coloured areas divided by the difference between $\theta_\text{l}$ and $\theta_\text{r}$.}
    \label{fig:cont-sweep-area}
\end{figure}
To these ends, we adapted two characteristics of Median Sweep to create our new quantifier Continuous Sweep. The adaptations enable us to compute derive expressions for the bias and variance of Continuous Sweep. The last adaptation regards changing the heuristic decision rule for discarding Adjusted Count estimates. We can change the heuristic decision rule using the expressions for the bias and variance of Continuous Sweep. In Section \ref{subsec:proofs}, we derive the expressions for the bias and variance and in Section \ref{subsec:pdelta}, we explain how to use these expressions to improve the decision rule.

\subsection{The bias and variance of Continuous Sweep} \label{subsec:proofs}
\noindent One of the main advantages of Continuous Sweep over Median Sweep is the ability to derive equations for the bias and variance, which in turn can be used to optimize the performance of both quantifiers (Section \ref{subsec:pdelta}). In this paper, we derive equations for the bias and variance of Continuous Sweep, given that the marginal distributions $F^+(\theta)$ and $F^-(\theta)$ are known. Given these assumptions, we first prove that Continuous Sweep is unbiased.
\newline
\begin{theorem}\label{thm:ev-ac}
    Assuming that $F^+(\theta)$ and $F^-(\theta)$ are known, the expected value of the Continuous Sweep quantifier $\hat\alpha_\text{CS}$ is equal to the true prevalence $\alpha$ and is therefore unbiased.
    
    \begin{align}\label{eq:ev-cs}
        E[\hat\alpha_\text{CS}(\theta, D_\text{test})] &= \alpha
    \end{align}
\end{theorem}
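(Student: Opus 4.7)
The plan is to reduce the expectation of $\hat\alpha_\text{CS}$ to the expectation of $\hat\alpha_\text{CC}(\theta, D_\text{test})$ pointwise in $\theta$ and exploit Eq.~\eqref{eq:ecc-cont}. Since we are told to assume that $F^+(\theta)$ and $F^-(\theta)$ are known, the denominator $F^+(\theta)-F^-(\theta)$ and the numerator's correction term $F^-(\theta)$ in Eq.~\eqref{eq:ac-theta-cont} are deterministic, and the decision boundaries $\theta_\text{l}, \theta_\text{r}$ (obtained by solving $F^+(\theta)-F^-(\theta)=p^\Delta$) are likewise deterministic. Consequently the only random quantity inside the integrand of Eq.~\eqref{eq:part-cont-sweep} is $\hat\alpha_\text{CC}(\theta, D_\text{test})$.

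Concretely, I would first apply Fubini's theorem (or simply linearity of expectation for a Riemann integral of a bounded, $\theta$-measurable random process on the bounded interval $[\theta_\text{l},\theta_\text{r}]$) to push $E[\cdot]$ inside the integral:
\begin{align*}
E[\hat\alpha_\text{CS}] &= \frac{1}{\theta_\text{r} - \theta_\text{l}} \int_{\theta_\text{l}}^{\theta_\text{r}} \frac{E[\hat\alpha_\text{CC}(\theta, D_\text{test})] - F^-(\theta)}{F^+(\theta) - F^-(\theta)}\, d\theta.
\end{align*}
Next I would substitute the expression for $E[\hat\alpha_\text{CC}(\theta)]$ given in Eq.~\eqref{eq:ecc-cont}, namely $E[\hat\alpha_\text{CC}(\theta)] = \alpha(F^+(\theta) - F^-(\theta)) + F^-(\theta)$. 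The $F^-(\theta)$ terms cancel in the numerator, leaving
\begin{align*}
\frac{E[\hat\alpha_\text{CC}(\theta)] - F^-(\theta)}{F^+(\theta) - F^-(\theta)} &= \frac{\alpha (F^+(\theta) - F^-(\theta))}{F^+(\theta) - F^-(\theta)} = \alpha,
\end{align*}
which is valid for every $\theta \in [\theta_\text{l}, \theta_\text{r}]$ because the construction of the decision boundaries via $p^\Delta > 0$ guarantees $F^+(\theta) - F^-(\theta) \ge p^\Delta > 0$ on the integration interval, so no division by zero occurs. Finally, integrating the constant $\alpha$ over $[\theta_\text{l}, \theta_\text{r}]$ and dividing by $\theta_\text{r}-\theta_\text{l}$ yields $\alpha$, establishing unbiasedness.

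There is essentially no hard step under the stated assumption that $F^+$ and $F^-$ are known: the argument is a clean interchange-of-integration-and-expectation followed by cancellation. The only subtlety worth flagging is that unbiasedness relies crucially on the assumption that $F^+, F^-$ are exact rather than estimated; if $\hat F^+, \hat F^-$ were used, the ratio would be nonlinear in the training-based estimates and Jensen-type effects would generally destroy exact unbiasedness. I would note this at the end as motivation for the variance analysis that follows, where the randomness of $D_\text{test}$ (and the step-function nature of $\hat\alpha_\text{CC}$) will re-enter the calculation in a nontrivial way.
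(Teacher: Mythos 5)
Your proposal is correct and follows essentially the same route as the paper's proof: justify the interchange of expectation and integral via boundedness of the integrand on $[\theta_\text{l},\theta_\text{r}]$ (guaranteed by $F^+(\theta)-F^-(\theta)\geq p^\Delta>0$), substitute Eq.~\eqref{eq:ecc-cont} to get $E[\hat\alpha_\text{AC}(\theta,D_\text{test})]=\alpha$ pointwise, and integrate the constant. The only difference is cosmetic: the paper states the Fubini bound slightly more explicitly as $|\hat\alpha_\text{AC}|\leq 1/p^\Delta$, which you imply but do not spell out.
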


\begin{proof}
    The expected value of Continuous Sweep is equal to
     \begin{align}\label{eq:ev-cs-wrongsides}
         E \left[ \hat\alpha_\text{CS}(\theta, D_\text{test}) \right] &= E \left[ \frac{1}{\theta_\text{r} - \theta_\text{l}} \int_{\theta_\text{l}}^{\theta_\text{r}} \hat\alpha_\text{AC}(\theta, D_\text{test}) d \theta  \right]  \nonumber \\
         &= \frac{1}{\theta_\text{r} - \theta_\text{l}} E \left[ \int_{\theta_\text{l}}^{\theta_\text{r}} \frac{\hat\alpha_\text{CC}(\theta, D_\text{test}) - F^-(\theta)}{F^+(\theta) - F^-(\theta)} d \theta \right].
     \end{align}
    According to Fubini's Theorem, we may exchange the integral and the expectation of Eq. \eqref{eq:ev-cs-wrongsides} if the following holds:

    \begin{align}\label{eq:fubini-infty}
        E \left[ \int_{\theta_\text{l}}^{\theta_\text{r}} \bigg\lvert \frac{\hat\alpha_\text{CC}(\theta, D_\text{test}) - F^-(\theta)}{F^+(\theta) - F^-(\theta)} \bigg\rvert d \theta \right] < \infty.
    \end{align}
    The decision boundaries ensure that, for all $\theta$ between $\theta_\text{l}$ and $\theta_\text{r}$, $F^+(\theta) - F^-(\theta) \geq p^\Delta$. Furthermore, the parameters $\hat\alpha_\text{CC}(\theta, D_\text{test})$ and $F^-(\theta)$ are naturally bounded between $0$ and $1$, and $p^\Delta$ is a constant independent of $D_\text{test}$. Therefore, $\hat\alpha_\text{AC}(\theta, D_\text{test})$ is bounded between $\frac{-1}{p^\Delta}$ and $\frac{1}{p^\Delta}$ if $p^\Delta > 0$, for all values of $\theta$ and all test sets $D_\text{test}$. Hence, for each $D_\text{test}$, it holds that $\int_{\theta_\text{l}}^{\theta_\text{r}} \bigg\lvert \frac{\hat\alpha_\text{CC}(\theta, D_\text{test}) - F^-(\theta)}{F^+(\theta) - F^-(\theta)} \bigg\rvert d \theta \leq \frac{\theta_\text{r} - \theta_\text{l}}{p^\Delta}$. Therefore, Eq. \eqref{eq:fubini-infty} holds and Fubini's theorem can be applied. As a result, the integral and expectation in Eq. \eqref{eq:ev-cs-wrongsides} are interchangeable as

    \begin{align}\label{eq:prf-ac-ev2}
        E \left[ \hat\alpha_\text{CS}(\theta, D_\text{test}) \right] &= \frac{1}{\theta_\text{r} - \theta_\text{l}} \int_{\theta_\text{l}}^{\theta_\text{r}} E \left[ \hat\alpha_\text{AC}(\theta, D_\text{test}) \right] d \theta.
    \end{align}
    Assuming that the parameters of $F^+$ and $F^-$ for each value of $\theta$ are known, it follows that $D_\text{test}$ is the only stochastic variable in the expectation. Using Eq. \eqref{eq:ecc-cont} then yields

    \begin{align}\label{eq:prf-ac-ev3}
        E \left[ \hat\alpha_\text{AC}(\theta, D_\text{test}) \right] &= \frac{E[\hat\alpha_\text{CC}(\theta, D_\text{test})] - F^-(\theta)}{F^+(\theta) - F^-(\theta)} = \frac{\alpha \cdot (F^+(\theta) - F^-(\theta)) }{F^+(\theta) - F^-(\theta)} = \alpha.
    \end{align}
    Hence, the expectation of the Adjusted Count quantifier is the true prevalence irrespective of $\theta$. Then, it is straightforward to substitute Eq. \eqref{eq:prf-ac-ev3} in Eq. \eqref{eq:prf-ac-ev2}, resulting in
 \begin{align}
        E \left[ \hat\alpha_\text{CS}(\theta, D_\text{test}) \right] &= \frac{1}{\theta_\text{r} - \theta_\text{l}} \int_{\theta_\text{l}}^{\theta_\text{r}} \alpha \text{ } d \theta = \frac{\alpha(\theta_\text{r} - \theta_\text{l})}{\theta_\text{r} - \theta_\text{l}} = \alpha.
    \end{align}
    This proves that $\hat\alpha_\text{CS}$ is unbiased.
\end{proof}
Because Continuous Sweep is unbiased, the mean squared error of the Continuous Sweep is equal to its variance, which we derive below.  
\newline
\begin{theorem}\label{thm:var-ac}
    The variance of Continuous Sweep is
    \begin{align}
        V \left[ \hat\alpha_\text{CS} \right] &= \frac{2}{n_\text{test}(\theta_\text{r} - \theta_\text{l})^2} \int_{y = \theta_\text{l}}^{\theta_\text{r}} \int_{x = y}^{\theta_\text{r}}\frac{ \alpha F^+(y) (1 - F^+(x)) + (1 - \alpha) F^-(y) (1 - F^-(x))}{(F^+(x) - F^-(x)) (F^+(y) - F^-(y))} dxdy. \label{eq:var-cs}
    \end{align}
    
    \begin{proof}
    The first step in computing the variance of Continuous Sweep is rewriting the variance in terms of expectations as

    \begin{align}
        V \left[ \hat\alpha_\text{CS} \right] &= E \left[ (\hat\alpha_\text{CS})^2 \right] - E \left[ \hat\alpha_\text{CS} \right]^2 \nonumber \\
        &= E \left[ \left( \frac{1}{\theta_\text{r} - \theta_\text{l}} \int_{\theta = \theta_\text{l}}^{\theta_\text{r}} \hat\alpha_\text{AC}(\theta, D_\text{test}) \text{ } d\theta \right)^2 \right] - \alpha^2, \label{eq:cs1}
    \end{align}
    by using Theorem \ref{thm:ev-ac} for computing the first moment. The second moment can be computed by expanding its expectation as

    \begin{align}\label{eq:sq-cs}
        E \left[ (\hat\alpha_\text{CS})^2 \right] &= E \left[ \left( \frac{1}{\theta_\text{r} - \theta_\text{l}} \int_{x = \theta_\text{l}}^{\theta_\text{r}}\hat\alpha_\text{AC}(x, D_\text{test}) \text{ } dx \cdot \frac{1}{\theta_\text{r} - \theta_\text{l}} \int_{y = \theta_\text{l}}^{\theta_\text{r}} \hat\alpha_\text{AC}(y, D_\text{test}) \text{ } dy \right) \right] \nonumber \\
        &= \frac{1}{(\theta_\text{r} - \theta_\text{l})^2} E \left[ \left( \int_{x = \theta_\text{l}}^{\theta_\text{r}} \int_{y = \theta_\text{l}}^{\theta_\text{r}} \hat\alpha_\text{AC}(x, D_\text{test}) \cdot \hat\alpha_\text{AC}(y, D_\text{test}) \text{ } dxdy \right) \right] \nonumber \\
    \end{align}
    Similar to Theorem \eqref{thm:ev-ac}, it can be proved that the integrand of Eq. \eqref{eq:sq-cs} is uniformly bounded, because $0 < (\theta_\text{r} - \theta_\text{l})^2 < \infty$ and $0 \leq \hat\alpha_\text{AC}(\theta, D_\text{test}) \leq \frac{1}{(p^\Delta)^2}$ for all $D_\text{test}$ and all $\theta$ in the relevant region. Hence, Fubini's Theorem can be used once more to interchange the integral and the expectation as

    \begin{align}
        E \left[ (\hat\alpha_\text{CS})^2 \right] = \frac{1}{(\theta_\text{r} - \theta_\text{l})^2} \int_{x = \theta_\text{l}}^{\theta_\text{r}} & \int_{y = \theta_\text{l}}^{\theta_\text{r}} E \left[ \hat\alpha_\text{AC}(x, D_\text{test}) \cdot \hat\alpha_\text{AC}(y, D_\text{test}) \right] dxdy.
    \end{align}
    Using the covariance rule $E[XY] = C[X,Y] + E[X]E[Y]$, the integral can be rewritten as an integral with a covariance as
    
    \begin{align}
        E \left[ (\hat\alpha_\text{CS})^2 \right] &= \frac{1}{(\theta_\text{r} - \theta_\text{l})^2} \int_{x = \theta_\text{l}}^{\theta_\text{r}} \int_{y = \theta_\text{l}}^{\theta_\text{r}} C \left[ \hat\alpha_\text{AC}(x, D_\text{test}), \hat\alpha_\text{AC}(y, D_\text{test}) \right] \nonumber \\ & \qquad \qquad \qquad \qquad \qquad \qquad  + E[\hat\alpha_\text{AC}(x, D_\text{test})] E[\hat\alpha_\text{AC}(y, D_\text{test})] dxdy \\
        &= \alpha^2 + \frac{1}{(\theta_\text{r} - \theta_\text{l})^2} \int_{x = \theta_\text{l}}^{\theta_\text{r}} \int_{y = \theta_\text{l}}^{\theta_\text{r}} C \left[ \hat\alpha_\text{AC}(x, D_\text{test}) , \hat\alpha_\text{AC}(y, D_\text{test}) \right] dxdy. \label{eq:cs2}
    \end{align}
    If we substitute Eq. \eqref{eq:cs2} in Eq. \eqref{eq:cs1}, we observe that the terms $\alpha^2$ cancel out. What remains is the covariance function, which can be simplified to
    
    \begin{align}
        V \left[ \hat\alpha_\text{CS} \right] &= \frac{2}{(\theta_\text{r} - \theta_\text{l})^2} \cdot \int_{y = \theta_\text{l}}^{\theta_\text{r}} \int_{x = y}^{\theta_\text{r}} C \left[ \hat\alpha_\text{AC}(x, D_\text{test}), \hat\alpha_\text{AC}(y, D_\text{test}) \right] dxdy . \label{eq:vcs-review}
    \end{align}
    Similar to Theorem \ref{thm:ev-ac}, the parameters of $F^+$ and $F^-$ are assumed to be known and can be written out of the covariance function as
    
    \begin{align}
       V \left[ \hat\alpha_\text{CS} \right] &=  \frac{2}{(\theta_\text{r} - \theta_\text{l})^2} \int_{y = \theta_\text{l}}^{\theta_\text{r}} \int_{x = y}^{\theta_\text{r}} \frac{C \left[ \hat\alpha_\text{CC}(x, D_\text{test}), \hat\alpha_\text{CC}(y, D_\text{test}) \right]}{(F^+(x) - F^-(x)) (F^+(y) - F^-(y))} dxdy. \label{eq:cov-cs-global}
    \end{align}
    The Classify and Count quantifier counts the number of elements in $D_\text{test}$ that are larger or equal than $\theta$. $D_\text{test}$ is constructed as follows. One draws an i.i.d. sample of size $n_\text{test}^+$ from $F^+$, denoted as random variables $A_1, A_2, ..., A_{n^+_{\text{test}}}$. Similarly, one draws another i.i.d. sample of size $n_\text{test}^-$ from $F^-$, denoted as random variables $B_1, B_2, ..., B_{n^-_{\text{test}}}$. We assume that each $A_i$ is independent of each $B_k$. Using indices $x$ and $y$ for a pair of samples, the covariance function can be rewritten as
    
    \begin{align}
       C[ & \hat\alpha_\text{CC}(x, D_\text{test}), \hat\alpha_\text{CC}(y, D_\text{test}) ] = \nonumber \\ & C \left[ \frac{1}{n_\text{test}} \left( \sum_{i=1}^{n_\text{test}^+} \mathbb{I} \left\{ A_i \geq x \right\} + \sum_{k=1}^{n_\text{test}^-} \mathbb{I} \left\{ B_k \geq x \right\} \right), \frac{1}{n_\text{test}} \left(\sum_{j=1}^{n_\text{test}^+} \mathbb{I} \left\{ A_j \geq y \right\} + \sum_{l=1}^{n_\text{test}^-} \mathbb{I} \left\{ B_l \geq y \right\} \right)  \right] \nonumber \\
       =& \frac{1}{n_\text{test}^2} \cdot C \left[ \sum_{i=1}^{n_\text{test}^+} \mathbb{I} \left\{ A_i \geq x \right\} + \sum_{k=1}^{n_\text{test}^-} \mathbb{I} \left\{ B_k \geq x \right\}, \sum_{j=1}^{n_\text{test}^+} \mathbb{I} \left\{ A_j \geq y \right\} + \sum_{l=1}^{n_\text{test}^-} \mathbb{I} \left\{ B_l \geq y \right\} \right]. \label{eq:cov-big1}
    \end{align}
    The cross-products of the covariance function in Eq. \eqref{eq:cov-big1} vanish because of our assumptions on independence. What remains is the equation
    
    \begin{align}
        C[ & \hat\alpha_\text{CC}(x, D_\text{test}), \hat\alpha_\text{CC}(y, D_\text{test}) ] = \nonumber \\ &\frac{1}{n_\text{test}^2} \cdot  C \left[ \sum_{i=1}^{n_\text{test}^+} \mathbb{I} \left\{ A_i \geq x \right\}, \sum_{j=1}^{n_\text{test}^+} \mathbb{I} \left\{ A_j \geq y \right\} \right] + \frac{1}{n_\text{test}^2} \cdot  C \left[ \sum_{k=1}^{n_\text{test}^-} \mathbb{I} \left\{ B_k \geq x \right\}, \sum_{l=1}^{n_\text{test}^-} \mathbb{I} \left\{ B_l \geq y \right\} \right] \nonumber \\
        &= \frac{n_\text{test}^+}{n_\text{test}^2} C \left[ \mathbb{I} \left\{ A_1 \geq x \right\}, \mathbb{I} \left\{ A_1 \geq y \right\} \right] + \frac{n_\text{test}^-}{n_\text{test}^2} C \left[ \mathbb{I} \left\{ B_1 \geq x \right\}, \mathbb{I} \left\{ B_1 \geq y \right\} \right]. \label{eq:cov-big2}  
    \end{align}
    The covariances in Eq. \eqref{eq:cov-big2} can be computed using the general rule $C[X, Y] = E[XY] - E[X]E[Y]$. Applying this rule combined with the indicator functions within the covariance function and the restriction that $x \geq y$ (review Eq. \eqref{eq:vcs-review}) results in the following covariance function for the positives:
    
    \begin{align}
        C \left[ \mathbb{I} \left\{ A_1 \geq x \right\}, \mathbb{I} \left\{ A_1 \geq y \right\} \right] &= E[\mathbb{I} \left\{ A_1 \geq x \right\} \cdot \mathbb{I} \left\{ A_1 \geq y \right\}] - E[\mathbb{I} \left\{ A_1 \geq x \right\}] \cdot E[\mathbb{I} \left\{ A_1 \geq y \right\}] \nonumber \\
        &= E[\text{min}(\mathbb{I} \left\{ A_1 \geq x \right\}, \mathbb{I} \left\{ A_1 \geq y \right\})] - E[\mathbb{I} \left\{ A_1 \geq x \right\}] \cdot E[\mathbb{I} \left\{ A_1 \geq y \right\}] \nonumber \\
        &= F^+(y) - F^+(x) \cdot F^+(y) = F^+(y)(1-F^+(x)). \label{eq:cov-ai}
    \end{align}
    Similarly, the following covariance function for the negatives can be derived:
    
    \begin{align}
        C \left[ \mathbb{I} \left\{ B_1 \geq x \right\}, \mathbb{I} \left\{ B_1 \geq y \right\} \right] &= F^-(y) - F^-(x) \cdot F^-(y) = F^-(y)(1-F^-(x)). \label{eq:cov-bi}
    \end{align}
    Substituting Eqs. \eqref{eq:cov-ai} and \eqref{eq:cov-bi} in Eq. \eqref{eq:cov-big2} results in an equation for the covariance of two Classify and Count quantifiers at different thresholds $x$ and $y$:
    
    \begin{align}
        C[ \hat\alpha_\text{CC}(x, D_\text{test}), \hat\alpha_\text{CC}(y, D_\text{test}) ] &= \frac{n_\text{test}^+}{n_\text{test}^2} F^+(y)(1-F^+(x)) + \frac{n_\text{test}^-}{n_\text{test}^2} F^-(y)(1-F^-(x)) \nonumber \\
        &= \frac{\alpha}{n_\text{test}} F^+(y)(1-F^+(x)) + \frac{1 - \alpha}{n_\text{test}} F^-(y)(1-F^-(x)). \label{eq:cov-two-cc}
    \end{align}  
    Finally, substituting Eq. \eqref{eq:cov-two-cc} in Eq. \eqref{eq:cov-cs-global} results in the variance of Continuous Sweep as
     \begin{align}
        V \left[ \hat\alpha_\text{CS} \right] &= \frac{2}{n_\text{test}(\theta_\text{r} - \theta_\text{l})^2} \int_{y = \theta_\text{l}}^{\theta_\text{r}} \int_{x = y}^{\theta_\text{r}}\frac{ \alpha F^+(y) (1 - F^+(x)) + (1 - \alpha) F^-(y) (1 - F^-(x))}{(F^+(x) - F^-(x)) (F^+(y) - F^-(y))} dxdy. \label{eq:var-cs2}
    \end{align}
    This concludes the proof of the expressions for the variance of the Continuous Sweep. 
    \end{proof}
\end{theorem}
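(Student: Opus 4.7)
The plan is to start from the identity $V[\hat\alpha_\text{CS}] = E[\hat\alpha_\text{CS}^2] - (E[\hat\alpha_\text{CS}])^2$. Theorem \ref{thm:ev-ac} already supplies $E[\hat\alpha_\text{CS}] = \alpha$, so the work reduces to computing the second moment and showing that the $\alpha^2$ pieces cancel neatly. First I would write $\hat\alpha_\text{CS}^2$ as a double integral over two dummy thresholds $x$ and $y$, so that $E[\hat\alpha_\text{CS}^2] = \frac{1}{(\theta_\text{r} - \theta_\text{l})^2} E\bigl[ \int \int \hat\alpha_\text{AC}(x, D_\text{test}) \hat\alpha_\text{AC}(y, D_\text{test})\, dx\, dy \bigr]$, and then swap the expectation past the double integral by Fubini. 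The required uniform bound is inherited from Theorem \ref{thm:ev-ac}: on the relevant square the Adjusted Count integrand is bounded in absolute value by $1/p^\Delta$, hence the product is bounded by $1/(p^\Delta)^2$ over a bounded region, which gives absolute integrability.

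Next I would apply the identity $E[UV] = C[U,V] + E[U] E[V]$. Because the proof of Theorem \ref{thm:ev-ac} already shows $E[\hat\alpha_\text{AC}(\theta, D_\text{test})] = \alpha$ pointwise in $\theta$, the product-of-means contribution integrates to exactly $\alpha^2$ and cancels the $-\alpha^2$ coming from $(E[\hat\alpha_\text{CS}])^2$. What remains is a double integral of $C[\hat\alpha_\text{AC}(x, D_\text{test}), \hat\alpha_\text{AC}(y, D_\text{test})]$, and since $F^+$ and $F^-$ are assumed known and deterministic they factor out of the covariance, leaving a ratio with $C[\hat\alpha_\text{CC}(x, D_\text{test}), \hat\alpha_\text{CC}(y, D_\text{test})]$ in the numerator and $(F^+(x) - F^-(x))(F^+(y) - F^-(y))$ in the denominator.

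I would then reduce the integration domain to the triangle $x \geq y$ using symmetry: the covariance and the denominator are both symmetric under swapping $x$ and $y$, so the integral over $[\theta_\text{l}, \theta_\text{r}]^2$ equals twice the integral over the upper triangle, which supplies the factor of $2$ appearing in \eqref{eq:var-cs}. The last substantive step is to evaluate the Classify-and-Count covariance explicitly. I would represent $D_\text{test}$ as the union of $n_\text{test}^+$ i.i.d.\ positive draws $A_1, \ldots, A_{n_\text{test}^+}$ and $n_\text{test}^-$ i.i.d.\ negative draws $B_1, \ldots, B_{n_\text{test}^-}$. Independence across the two classes and across observations within a class kills all cross-covariance terms, reducing the covariance of the two sums of indicators to $n_\text{test}^+$ times the single-observation covariance $C[\mathbb{I}\{A_1 \geq x\}, \mathbb{I}\{A_1 \geq y\}]$ plus the analogous term for the $B$-sample. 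On the triangle $x \geq y$ the product $\mathbb{I}\{A_1 \geq x\} \mathbb{I}\{A_1 \geq y\}$ collapses to a single indicator, so the scalar covariance reduces to a marginal tail probability times its complement, giving the $F^+(y)(1 - F^+(x))$ and $F^-(y)(1 - F^-(x))$ shapes in the numerator. Dividing through by $n_\text{test}^2$ and using $n_\text{test}^+/n_\text{test} = \alpha$, $n_\text{test}^-/n_\text{test} = 1-\alpha$, then substituting back into the double integral, produces exactly \eqref{eq:var-cs}.

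The main obstacle I expect is bookkeeping in the last step: tracking which of $x,y$ is the larger threshold so that the product of indicators simplifies correctly, and making sure the ordering of arguments in $F^+(y)(1 - F^+(x))$ lines up with the triangle reduction. The Fubini justification is conceptually the same as in Theorem \ref{thm:ev-ac}, applied this time to a bounded double integral on a compact rectangle, so it should go through essentially verbatim, as should the independence-based reduction of the covariance of the two sums of indicators.
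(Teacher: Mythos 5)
Your proposal is correct and follows essentially the same route as the paper's proof: the variance decomposition via the second moment, the Fubini interchange justified by the uniform bound $1/(p^\Delta)^2$ on the product of Adjusted Count integrands, the cancellation of $\alpha^2$ via the covariance identity, the symmetry reduction to the triangle $x \geq y$ yielding the factor of $2$, and the evaluation of the Classify-and-Count covariance by splitting $D_\text{test}$ into independent positive and negative i.i.d.\ samples so that the single-observation indicator covariance gives $F^+(y)(1-F^+(x))$ and $F^-(y)(1-F^-(x))$. The only difference is cosmetic: you state the boundedness of the product directly, whereas the paper (with a small typographical slip) attributes the bound $1/(p^\Delta)^2$ to $\hat\alpha_\text{AC}$ itself rather than to the product of two such terms.
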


Now that we have derived the bias and variance of the Continuous Sweep quantifier, we will show how to use these results to derive an optimal value for the threshold $p^\Delta$.

\subsection{Optimal threshold $p^\Delta$} \label{subsec:pdelta}
\noindent The analytic expression for the variance of Continuous Sweep shows that its value is dependent on decision boundaries $\theta_\text{l}$ and $\theta_\text{r}$. Those two values are determined by $p^\Delta$, the minimum difference between the true and false positive rate of the data points to be included. As shown in Figure \ref{fig:pdelta_var}, a large value of $p^\Delta$ results in a small difference between $\theta_\text{l}$ and $\theta_\text{r}$, whereas a small value of $p^\Delta$ results in a large difference between $\theta_\text{l}$ and $\theta_\text{r}$. 

A larger difference between the decision boundaries includes more, but potentially unreliable, Adjusted Count estimates in the prevalence estimate. This trade-off is illustrated in Figure \ref{fig:pdelta_var}, where an extremely low $p^\Delta$ leads to extreme variances and the maximum possible value of $p^\Delta$ leads to the variance of only one adjusted count estimate. Forman \cite{forman2005} proposed for Median Sweep a threshold equal to $p^\Delta = \frac{1}{4}$, which we will call the \emph{traditional} threshold. For Continuous Sweep, we choose an optimized value for $p^\Delta$ using the derivations of Section \ref{subsec:proofs}. Minimizing Eq. \eqref{eq:var-cs}, viewed as a function of $p^\Delta$, returns the \textit{optimal} threshold $p^\Delta$, in the sense that it yields a prevalence estimate with minimal variance (and hence the minimal MSE due to Theorem \ref{thm:ev-ac}). Optimizing $p^\Delta$ improves the heuristic decision rule of Median Sweep and is therefore the third and final property of Median Sweep that has been changed in Continuous Sweep.

In order to use the analytic expressions in the optimization of $p^\Delta$, we are obliged to make two practical choices. First, the variance function (Eq. \eqref{eq:var-cs}) has parameters that are often unknown in practice. For example, the prevalence $\alpha$ is a parameter in Eq. \eqref{eq:var-cs}. Because the prevalence is unknown, one can use $\alpha = 0.5$ as a plug-in estimate. We experimented that the prevalence has a negligible impact on the total variance and hence on the optimal $p^\Delta$.  Therefore, the plug-in estimator of $\alpha = 0.5$ can reliably be used in Eq. \eqref{eq:var-cs} and we do not include to maintain focus on this parameter. Second, the parameters of $F^+$ and $F^-$ may be estimated using $D_\text{train}$ if they are unknown. In that case, the optimal threshold can be estimated by using plug-in estimators for the parameters of $F^+$ and $F^-$ in Eq. \eqref{eq:var-cs}. In our simulation studies, we use the \verb|optim| function in R to approximate the optimal threshold numerically in this manner. 

\begin{figure}
    \centering
    \begin{subfigure}{0.49\textwidth}
        \centering
        \includegraphics[width = \textwidth]{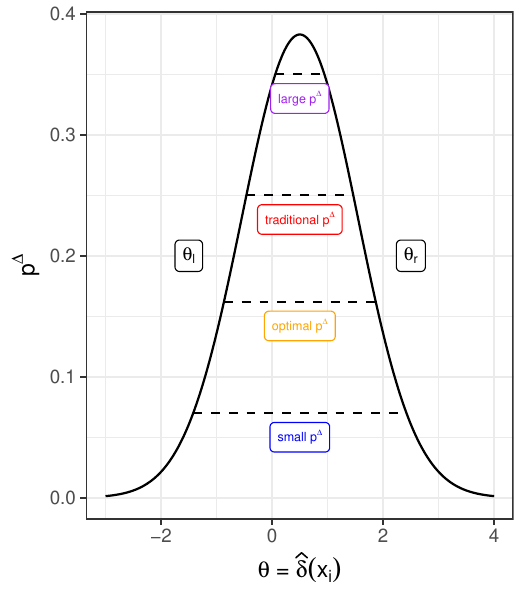}
        \caption{}
        \label{fig:pdelta_var}
    \end{subfigure}
    \hfill
    \begin{subfigure}{0.49\textwidth}
        \centering
        \includegraphics[width = \textwidth]{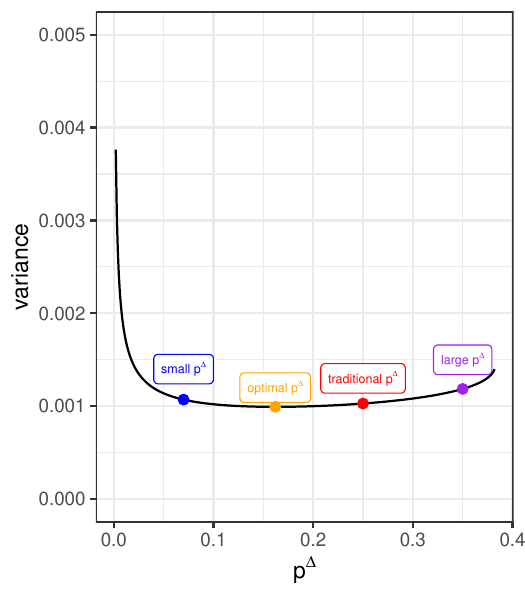}
        \caption{}
        \label{fig:pdelta_range}
    \end{subfigure}
    \caption{Panel (a) illustrates the difference between $F^+(\theta)$ and $F^-(\theta)$ for each $\theta$. Panel (b) illustrates the theoretical variance at different $p^\Delta$ of the Continuous Sweep with a $D_\text{test}$ of size $1000$ and prevalence $\alpha = 0.5$, where $F^+ \sim N(1, 1)$ and $F^- \sim N(0, 1)$.}
    \label{fig:pdelta}
\end{figure}

\section{Simulation Study}
\noindent In this section, we conduct three simulation studies to evaluate the performance of Continuous Sweep in comparison to other quantifiers. 
In the first study, we assume that both $F^+$ and $F^-$ follow a normal distribution with known parameters. This study allows us to compare the quantifiers under ideal conditions, where the assumptions align with those of Theorem \ref{thm:ev-ac} and \ref{thm:var-ac}. In real-world scenarios, $F^+$ and $F^-$ must be estimated from training data. So, in the second study, we still assume that $F^+$ and $F^-$ follow a normal distribution, but their parameters are now estimated from the data, resulting in $\hat F^+$ and $\hat F^-$. The third study examines a case where the data follows a skew-normal distribution. For the third study, we fit two versions of Continuous Sweep, one assuming normal data and the other assuming skew-normal distributed data.

For each study, we divide the comparison of the quantifiers into two groups. First, we compare Continuous Sweep with other quantifiers within the \emph{Classify, Count, and Correct} group. Second, we compare Continuous Sweep with two quantifiers from the \emph{Distribution Matching} group, namely DyS and SLD, which had strong performances at the LeQua2022 competition.      

\subsection{Study 1: Known parameters for $F^+$ and $F^-$ using a normal distribution}

\subsubsection*{Outline and design}
The first simulation study assumes that the discriminant scores of the positive class follow a normal distribution $F^+$ with mean $\mu^+ = 1$ and a standard deviation $\sigma^{+}$, while the scores of the negative class follow a normal distribution $F^-$ with mean $\mu^- = 0$ and a standard deviation of $\sigma^{-}$. Moreover, in this study we assume that the mean and standard deviation of classes are known. The test data $D_\text{test}$ contain a proportion $\alpha$ (prevalence) of discriminant scores sampled from $F^+$ and a proportion $1 - \alpha$ of discriminant scores sampled from $F^-$. 

Using the known distributions of the discriminant scores as "training data" and a sampled test set $D_\text{test}$, Continuous Sweep can estimate the prevalence with Algorithm \ref{alg:cont-sweep}. Estimating the prevalence using Median Sweep is described in Section $2$. The decision boundaries $\theta_l$ and $\theta_r$ are determined using the known distributions $F^+$ and $F^-$. Consequently, we also use $F^+$ and $F^-$ in Eq. \eqref{eq:part-cont-sweep} instead of $\hat F^+$ and $\hat F^-$, because we assume that the marginal distributions are known. To compute Median Sweep, $F^+$ and $F^-$ replace $p^+$ and $p^-$ in both Eq. \eqref{eq:ac-theta} and the decision rule, in order to obtain $\boldsymbol{\theta}_\text{MS}$.

As described in Section \ref{subsec:pdelta}, the performance of Continuous Sweep, but also of Median Sweep, is dependent on $p^\Delta$. Hence, we apply the traditional value of $p^\Delta = \frac{1}{4}$ and the optimal $p^\Delta$ on both quantifiers. As we are unable to obtain an optimal $p^\Delta$ using Median Sweep, we used the optimal $p^\Delta$ estimated by Continuous Sweep. Consequently, this results in a comparison between four quantifiers in the group \emph{Classify, Count, and Correct}: 1) Optimal Continuous Sweep (O-CS), 2) Traditional Continuous Sweep (T-CS), 3) Optimal Median Sweep (O-MS), and 4) Traditional Median Sweep (T-MS). 

The two distribution matchers, SLD and DyS, use probabilities instead of discriminant scores. Therefore, we transform the discriminant scores to probabilities by using Naive Bayes with a prevalence of $0.5$ and the known distributions $F^+$ and $F^-$. Moreover, SLD needs the prevalence of the training set as input of their algorithm. In line with the transformation of the discriminant scores, we set this prevalence to $0.5$. DyS needs as input the histogram of the training data of the probabilities in the positive and negative class. Since the transformation from discriminant scores to probabilities is non-inversible, we cannot derive the distribution of the probabilities directly. Therefore, we sample this distribution with many samples in both the positive and negative class ($n = 10^7$) to accurately estimate these "known" distributions and make the comparison between the quantifiers fair.

The design of the simulation study is based on a full-factorial design of four factors. The number of observations $n_\text{test}$ in $D_\text{test}$ is either $100$ or $1000$, which is mainly intended to investigate the relationship between sample size and variance. To investigate the relationship between the classifier and the quantifiers, the possible values of $\sigma^+$ and $\sigma^-$ are both set to $0.5$, $1$, and $1.5$. Last, the prevalence $\alpha_\text{test}$ can take values of $0.3$, $0.5$, and $0.9$ to examine whether the quantifiers behave differently under either balanced or imbalanced data. The values of $\mu^+ = 1$ and $\mu^- = 0$ are fixed across the simulation study. In summary, we designed the simulation study using $2 \times 3 \times 3 \times 3 = 54$ different situations. For each situation, we generate $10,\!000$ test sets $D_\text{test}$ used to compute the prevalence estimates from the six quantifiers. With the prevalence estimates, we can estimate the bias, variance and mean squared error (MSE) for the six quantifiers under each of the $54$ situations. 

\begin{figure}
    \centering
    \includegraphics[width = \textwidth]{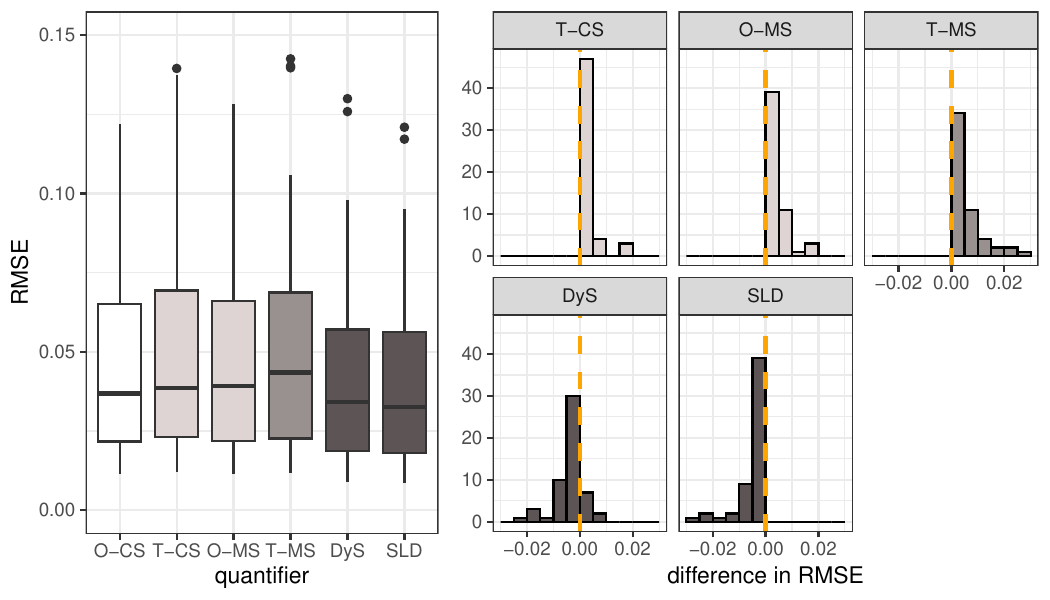}
    \caption{Simulation study 1: Comparison of the quantifiers against optimal Continuous Sweep (O-CS) using boxplots and histograms. The boxplots (left) show the distribution of the RMSE across the 54 situations of the six quantifiers. The histograms (right) show the difference of the RMSE between the other quantifiers against O-CS. A negative value indicates that O-CS has a higher RMSE than the other quantifier, vice versa for positive values.}
    \label{fig:s1-box-hist}
\end{figure}

\subsubsection*{Results}
Figure \ref{fig:s1-box-hist} compares the quantifiers in terms of RMSE. For the quantifiers in the group \emph{Classify, Count and Correct}, O-CS has the lowest RMSE of the four quantifiers in each of the $54$ situations and always outperforms the other quantifiers. Moreover, T-CS is often second best, but is outperformed by one or both Median Sweep quantifiers in some situations. The difference in RMSE between O-CS and the other quantifiers is often small, but is in some situations larger, especially compared with T-MS. For the quantifiers in the group \emph{Distribution Matching}, we see that SLD outperforms all quantifiers in each situation. The difference between SLD and O-CS is comparable with the difference between O-CS and T-MS. O-CS outperforms DyS in $9$ of the $54$ situations. 


\subsection{Study 2: Estimated parameters for $F^+$ and $F^-$ using a normal distribution}

\subsubsection*{Outline and design}
In the second simulation study, we make the assumption that the parameters of $F^+$ and $F^-$ are unknown and have to be estimated. Like the first simulation study, the positive discriminant scores are normally distributed with mean $\mu^+ = 1$, and standard deviation $\sigma^+$, and the negative discriminant scores are normally distributed with mean $\mu^- = 0$, and standard deviation $\sigma^-$. The parameters of $\hat F^+$ and $\hat F^-$ can be estimated by maximum likelihood estimation. With these estimated parameters, we can derive the decision boundaries $\theta_l$ and $\theta_r$ using $\hat F^+$ and $\hat F^-$. Again, we can use Algorithm \ref{alg:cont-sweep} to compute the prevalence estimates of Continuous Sweep. Median Sweep estimates the true and false positive rate using empirical cumulative density functions $p^+$ and $p^-$.  

The discriminant scores in $D_\text{train}$ are converted to probabilities by using Naive Bayes with a prevalence of $\alpha_{train}$ and the known distributions $F^+$ and $F^-$. The converted discriminant scores in $D_\text{test}$ also uses Naive Bayes with a prevalence of $\alpha_{train}$ and the known distributions $F^+$ and $F^-$ to match the probabilities in $D_\text{train}$. In contrast to the first simulation study, the parameter $\alpha_\text{train}$ is available as input parameter for SLD. DyS needs as input the histogram of the training data of the probabilities in the positive and negative class, which we can construct from the transformed discriminant scores. Converting from discriminant scores to probabilities in this way may be slightly in favor for the distribution matchers, but is the most consistent in the set-up of this simulation study.

In the second simulation study, we use the same $54$ settings for $n_\text{test}$, $\alpha_\text{test}$, $\mu^+$, $\mu^-$, $\sigma^+$, and $\sigma^-$. In addition, we add two extra variables $n_\text{train}$ and $\alpha_\text{train}$. We vary the size $n_\text{train}$ of the training set between $100$ and $1000$ and its prevalence $\alpha_\text{train}$ between $0.4$, $0.5$, and $0.6$. The reason why we vary little in $\alpha_\text{train}$ between training sets, is that extreme prevalence values return only a few data points for one class for small training sets, but we still want to investigate whether $\alpha_\text{train}$ has a large effect on the bias, variance and MSE of the quantifiers. The number of situations is therefore increased to $54 \times 2 \times 3 = 324$ situations. 

\begin{figure}
    \centering
    \includegraphics[width = \textwidth]{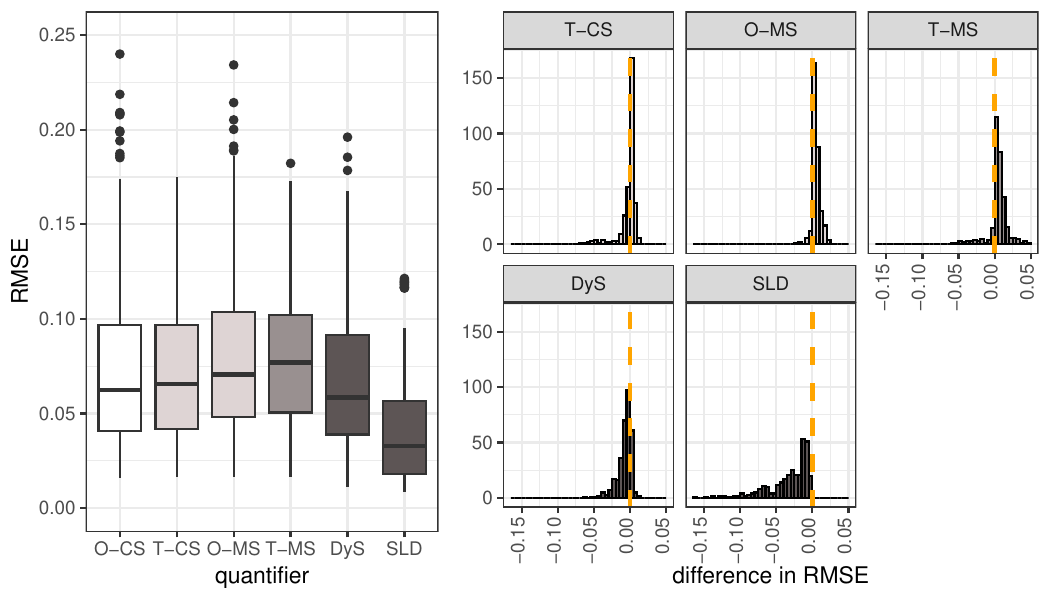}
    \caption{Simulation study 2: Comparison of the quantifiers against optimal Continuous Sweep (O-CS) using boxplots and histograms. The boxplots (left) show the distribution of the RMSE across the $324$ situations of the six quantifiers. The histograms (right) show the difference of the RMSE between the other quantifiers against O-CS. A negative value indicates that O-CS has a higher RMSE than the other quantifier, vice versa for positive values. The values for the RMSE in study 2 are larger than the values in study 1, because there is more uncertainty in estimating the true and false positive rates.}
    \label{fig:s2-box-hist}
\end{figure}

\subsubsection*{Results}
Figure \ref{fig:s2-box-hist} compares the quantifiers in terms of RMSE. For the quantifiers in the group \emph{Classify, Count and Correct}, O-CS has the lowest RMSE of the four quantifiers across the $324$ conditions and outperforms the other quantifiers in a large majority ($211$ of out $324$) of the situations. In the other situations, T-CS outperforms O-CS, T-MS, and O-MS. This quantifier seems more robust for extreme cases, for example the ones where the $p^\Delta$ cannot be estimated accurately. Specifically, O-CS performs the best when the class majority belonged to a class with small variance, when the training set was large, and when the test set was small. In these cases, the optimal $p^\Delta$ can be estimated most accurately. The order of the performance between the quantifiers remains the same in the second simulation study, with O-CS performing the best and T-MS performing the worst out of the four \emph{Classify, Count and Correct} quantifiers. For the quantifiers in the group \emph{Distribution Matching}, we see that SLD outperforms all quantifiers in each situation as in Study 1. SLD performs relatively better in the second simulation study than the first, but is slightly in favor due to the characteristics of the study's design. In $68$ of out $324$ situations, O-CS outperforms DyS.

\subsection{Study 3: Misfit on more complex distributions}

\subsubsection*{Outline and design}
Similar to the second simulation study, the third simulation study assumes that the parameters of $F^+$ and $F^-$ have to be estimated from the training data. In study 2, we assumed that we correctly specified that the training and test data follow a normal distribution. In study 3, the training and test data follow a skew-normal distribution and we fit two types of Continuous Sweep: one that assumes a skew-normal distribution (correct fit) and one that assumes a normal distribution (incorrect fit). We fit Median Sweep and Continuous Sweep using the normal distribution in the same way as in the second simulation study. Continuous Sweep using the skew-normal distribution works similarly, but an additional shape parameter is required. We estimate the shape parameter using maximum likelihood estimation, since no analytic solution exists. 

Fitting Continuous Sweep with a skew-normal distribution takes substantially more time than fitting Continuous Sweep with a normal distribution. Therefore, we take a subset of the design given in the second simulation study. For all simulation studies, $D_\text{train}$ consists of $1000$ observations with prevalence $\alpha_\text{train} = 0.5$. The test set $D_\text{test}$ can vary between $100$ and $1000$ observations, the standard deviations $\sigma^+$ and $\sigma^-$ can both take the values $0.5$ and $1.5$, and the test set prevalence $\alpha_\text{test}$ can vary between $0.3$, $0.5$, and $0.9$. This results in $2 \times 2 \times 2 \times 3 = 24$ situations. Each of the 24 situations are fitted on three small simulation studies: the first study considers little skew in the data (skew = 1), the second study considers medium skew (skew = 2) and the third study considers large skew (skew = 4). The set-up for SLD and DyS is similar, but we adjust the marginal density from a normal to a skew-normal distribution. 

To summarize, we compare within the group \emph{Classify, Count, and Correct} six quantifiers: Median Sweep (MS), Continuous Sweep with normal fit (CS (norm)), and Continuous Sweep with skew-normal fit (CS (skew)), all with an optimal (O) and a traditional (T) $p^\Delta$. Within the group of \emph{Distribution Matchers}, we compare O-CS (skew) with SLD and DyS. The skewness of the data is either small, medium, or large, resulting in three small simulation studies on $24$ different situations of the data.

\begin{figure}
    \centering
    \begin{subfigure}[b]{\textwidth}
        \includegraphics[width=\textwidth]{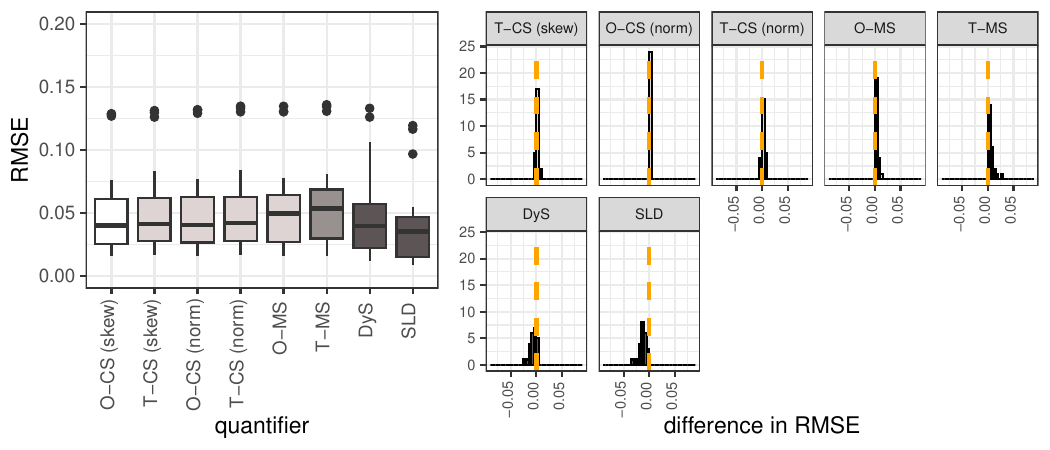}
		\caption{skew = 1}
        \label{fig:s3a-box-hist}
    \end{subfigure}
    
    \begin{subfigure}[b]{\textwidth}
        \includegraphics[width=\textwidth]{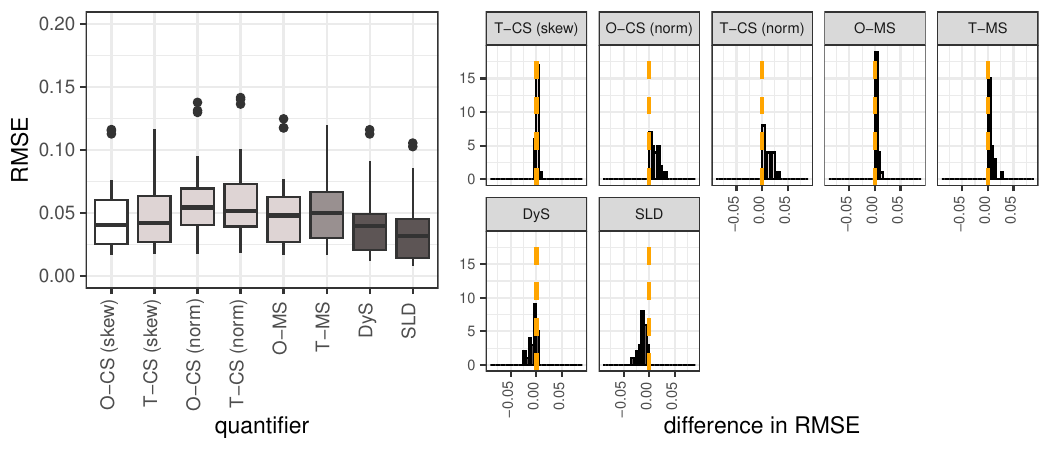}
		\caption{skew = 2}
        \label{fig:s3b-box-hist}
    \end{subfigure}
    
    \begin{subfigure}[b]{\textwidth}
        \includegraphics[width=\textwidth]{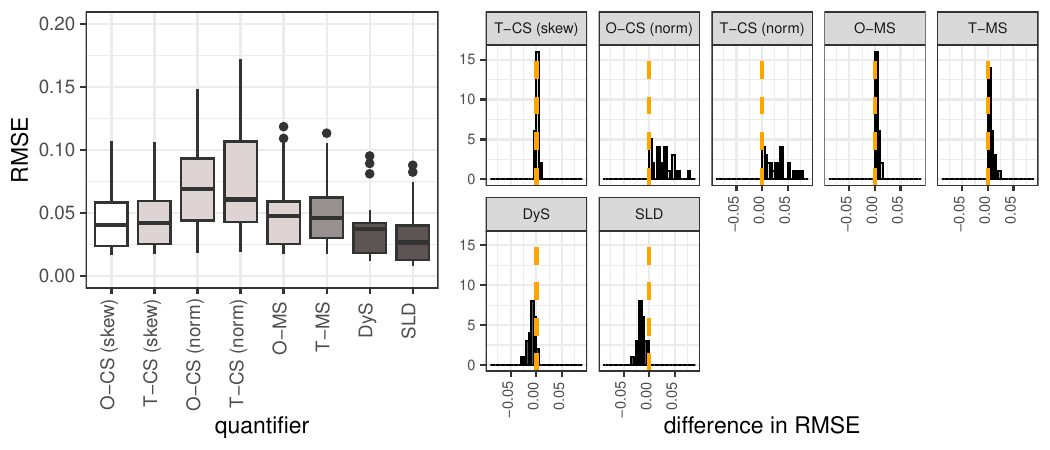}
		\caption{skew = 4}
        \label{fig:s3c-box-hist}
    \end{subfigure}
    
    \caption{Simulation study 3: Comparison of the quantifiers against optimal Continuous Sweep (O-CS (skew)) using boxplots and histograms. The boxplots (left) show the distribution of the RMSE across the 54 situations of the six quantifiers. The histograms (right) show the difference of the RMSE between the other quantifiers against O-CS (skew). A negative value indicates that O-CS (skew) has a higher RMSE than the other quantifier, vice versa for positive values.}
    \label{fig:s3-box-hist}
\end{figure}

\subsubsection*{Results}
Figure \ref{fig:s3-box-hist} compares the quantifiers in terms of RMSE. In general, the RMSE is lower for higher values of skew. For each level of skewness, the Continuous Sweep quantifiers with skew-normal fit consistently outperform the other quantifiers in the group \emph{Classify, Count, and Correct}. O-CS (norm) outperforms MS if the skew is small, but performs worse when the skew increases. In general, an optimal $p^\Delta$ performs better across the situations than a traditional $p^\Delta = \frac{1}{4}$ across all quantifiers and levels of skewness. O-CS (skew) always outperforms O-CS (norm) in all conditions for all skew levels. For the group of \emph{Distribution Matching}, we see a similar pattern between the different skew levels, where SLD always outperforms the other quantifiers and DyS is sometimes outperformed by O-CS (skew).

\subsection{Summary} \label{subsec:summary}
We compared Continuous Sweep and Median Sweep by three simulation studies. The first simulation study showed that Continuous Sweep with an optimal threshold always outperforms Median Sweep assuming that the distributions for the positive and negative discriminant scores are normal with known parameters. The second simulation study showed that either Continuous Sweep with an optimal threshold or Continuous Sweep with the traditional threshold outperforms Median Sweep assuming that the distributions for the positive and negative discriminant scores are normal with unknown parameters. The third simulation study shows that Continuous Sweep outperforms Median Sweep if the distributions of the discriminant scores are correctly specified (skew-normal data with skew-normal model assumptions), but underperformed when the distributions of the discriminant scores are misspecified (skew-normal data with normal model assumptions). SLD outperformed all quantifiers and DyS often outperformed Continuous Sweep is this simulation design. 

\section{Real-world application: LeQua2022 competition}

In addition to a simulation study, we compared Continuous Sweep to other quantifiers in terms of real data. This data originated from LeQua2022: a competition designed for quantifiers. We applied Continuous Sweep to the data and compared the results with Median Sweep and two other quantifiers: SLD and DyS. Both SLD and DyS performed splendid in LeQua2022 and were therefore included in our comparison between quantifiers. In this section, we describe the data, demonstrate how to fit Continuous Sweep, and analyze the results.

\subsection{Data description}
Each observation in the LeQua2022 data contained $300$ vector representations of textual product reviews as predictor variables and one label indicating whether the review was either positively or negatively rated. The data is split in two parts: 1) one training set with $5000$ observations where the label of each observation is known, and 2) $5000$ test sets with $250$ observations where, in contrast to the training set, the labels are unknown. The quantifiers aim to estimate the prevalence of all test sets as well as possible. The prevalence across the test sets varied between zero and one, aiming that the quantifiers should be able to make good estimates for a wide range of prevalence values. The quantifiers are evaluated on the mean absolute error (MAE) between the true and estimated prevalence across the $5000$ test sets.

\subsection{Fitting the quantifier and underlying classifier}
The training data has been used to fit the quantifier and the underlying classifier. Using Tidymodels \cite{tidymodels2020}, we chose a support vector machine (SVM) with a radial basis function, following the top-competitor of the LeQua2022 competition. We tuned the SVM using grid search choosing the hyperparameters that maximized the ROC-AUC metric, which resulted in values $C = 21.485$ and $\sigma = 1.1178 \times 10^{-4}$.

After we fitted the underlying classifier, we applied our quantifier Continuous Sweep. The first step in fitting Continuous Sweep, is obtaining the observations' scores from the training data and these were computed using 5-fold cross validation. We obtained two types of scores using Platt-scaling: 1) probabilities that were used for SLD and DyS, and 2) the raw distances that were used for Continuous Sweep and Median Sweep.

In order to fit Continuous Sweep, we separated the scores for the positive and the negative labels. We chose to fit the densities with a skew-normal distribution for both the positive and negative scores. The parameters that have to be estimated are the shape ($\xi$), scale ($\omega)$, and skew ($\beta$). We estimated that the positive scores were skew-normally distributed parameter values $\xi = 1.0309, \omega = 1.4119, \beta = 1.0068$, and the negatives with values $\xi = 0.8425, \omega = 1.6677, \beta = -1.7983$ respectively. We optimized the threshold resulting in $p^\Delta = 0.400$. 

Median Sweep is fitted similarly as Continuous Sweep. We separated the scores for the positive and the negative labels, but used an empirical cumulative density function (ecdf) to fit the marginal distributions. The threshold is the default from Forman \cite{forman2008}, with $p^\Delta = 0.25$. DyS and SLD are fitted using the default values from the QuaPy Python package alongside the tuned SVM. DyS has $8$ bins with a tolerance value of $10^{-5}$ and SLD has a maximum of $1000$ iterations with a tolerance value of $10^{-4}$. 

\subsection{Results}
We compared the performance of Continuous Sweep (O-CS) with Median Sweep (T-MS), SLD, and DyS using $5000$ test sets. After we have computed $5000$ prevalence estimates, we summarized the results in Table \ref{tbl:lq-mae} and Figure \ref{fig:lq-whole}. In terms of RMSE, MAE, and RAE, O-CS performed much better than T-MS, but slightly worse than SLD and DyS, which can be confirmed by the boxplots in Figure \ref{fig:lq-whole}. The performance of SLD and DyS is slightly different between the results published from the LeQua2022 competition and our application. The MAE of SLD and DyS are in the competition $0.0252$ and $0.0281$ respectively, against $0.0231$ and $0.0251$ in our application. The main reason of this difference, is the difference in the underlying classifier. 

From the boxplots, it looks like that SLD and DyS had less extreme outliers than O-CS and T-MS. Moreover, it appears that DyS has a small bias, especially compared with the other quantifiers. In Figure \ref{fig:lq-sub3}, we see that O-CS outperforms T-MS in $2957$ test sets, outperforms DyS in $2508$ test sets, and outperforms SLD in $2188$ test sets. In Figure \ref{fig:lq-sub4}, we see that the performance of all quantifiers is quite similar around a balanced prevalence of $\alpha = 0.5$. Given the true prevalence, the variance between the differences in AE between O-CS and T-MS, and between O-CS and DyS, seem constant across each prevalence value. For extreme prevalence values, O-CS performs better than T-MS and slightly worse than DyS. The difference between O-CS and SLD is not constant given the true prevalence values. For prevalence values around $\alpha = 0.5$, the predictions of O-CS and SLD are much closer to each other than for extreme prevalence values. SLD performs better than O-CS for extreme prevalence values. 

\begin{table}[ht]
\centering
\caption{Error metrics of $5000$ test sets from the LeQua 2022 competition for four quantifiers.}
\begin{tabular}{lrrr}
\toprule
\textbf{Quantifier} & \textbf{MAE} & \textbf{RMSE} & \textbf{RAE} \\
\midrule
SLD & 0.0231 & 0.0292 & 0.1103 \\
DyS & 0.0251 & 0.0316 & 0.1314 \\
O-CS & 0.0256 & 0.0322 & 0.1556 \\
T-MS & 0.0294 & 0.0375 & 0.2123 \\
\bottomrule
\end{tabular}
\label{tbl:lq-mae}
\end{table}

\begin{figure}
    \centering
    \begin{subfigure}{0.45\textwidth}
        \centering
        \includegraphics[width=\textwidth]{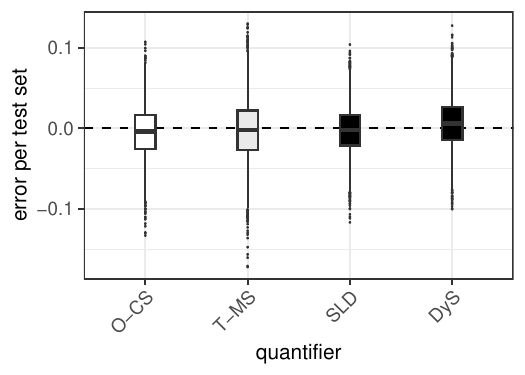}
        \caption{Difference between predicted prevalence and true prevalence.}
        \label{fig:lq-sub1}
    \end{subfigure}
    \hfill
    \begin{subfigure}{0.45\textwidth}
        \centering
        \includegraphics[width=\textwidth]{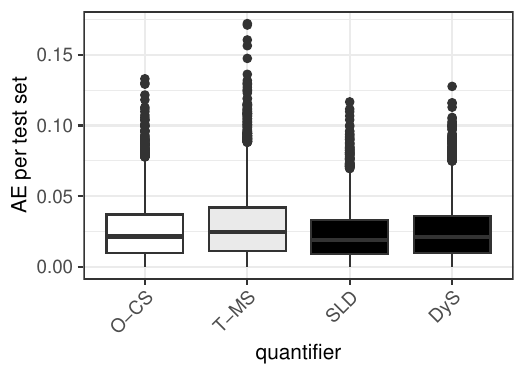}
        \caption{Absolute difference of predicted prevalence and true prevalence.}
        \label{fig:lq-sub2}
    \end{subfigure}
    \vskip\baselineskip
    \begin{subfigure}{0.45\textwidth}
        \centering
        \includegraphics[width=\textwidth]{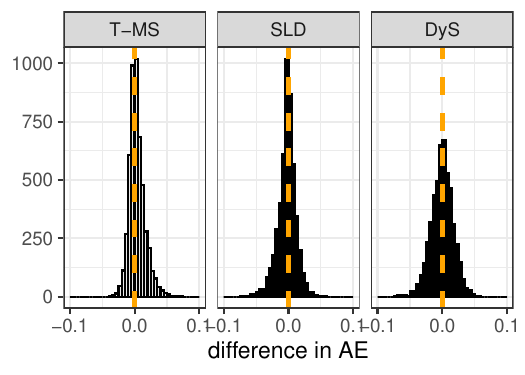}
        \caption{Histogram of difference in AE between quantifier and O-CS.}
        \label{fig:lq-sub3}
    \end{subfigure}
    \hfill
    \begin{subfigure}{0.45\textwidth}
        \centering
        \includegraphics[width=\textwidth]{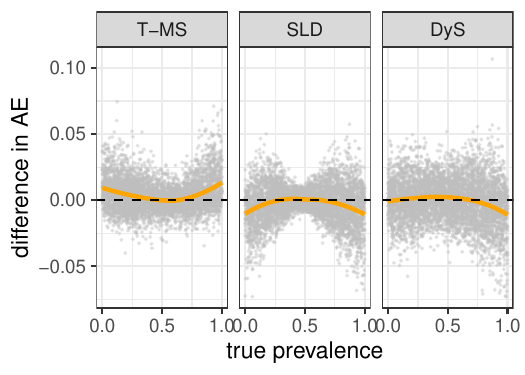}
        \caption{Scatterplot of difference in AE between quantifier and O-CS.}
        \label{fig:lq-sub4}
    \end{subfigure}
    \caption{Comparing the outcomes of the $5,000$ test sets from the LeQua2022 competition. In Figure (a) and (b), the error and absolute error (estimate - true value) against each quantifier are compared. In Figure (c), the distribution of the difference in AE between three quantifiers and O-CS are compared, where a positive value indicates that O-CS has a lower AE value. In Figure (d), a similar plot is presented, but with the addition of the true prevalence on the x-axis.}
    \label{fig:lq-whole}
\end{figure}

\subsection{Summary Simulation Study}
We compared optimal Continuous Sweep (O-CS) with Median Sweep (T-MS), SLD and DyS using the LeQua2022 competition data. Across $5000$ test sets with prevalences varying between $0$ and $1$, we found that O-CS performed on average better than T-MS, slightly worse than DyS and is outperformed by SLD. The prevalence estimates of O-CS are close to the prevalence estimates of SLD for prevalence values around $\alpha = 0.5$. 

\newpage
\section{Discussion}
\noindent In this paper, we studied the group of \emph{Classify, Count, and Correct} quantifiers. In Section 2, we reviewed the development of quantifiers in this group. In Section 3, we took the most advanced and best performing quantifier from this group, Median Sweep, and modified it in order to enable a theoretical analysis. We named the resulting quantifier Continuous Sweep. Continuous Sweep modifies Median Sweep in two aspects: 1) instead of the empirical distributions of the true and false positive rate, we used a theoretical distribution; 2) instead of the median of a set of adjusted count estimates we used the mean. These modifications enabled us to derive analytical expressions for the bias and variance of Continuous Sweep under a common set of assumptions. As such Continuous Sweep is one of a few quantifiers for which analytical results are available. We used the expressions for bias and variance to find an optimal value of the threshold hyperparameter ($p^\Delta$), a value that minimizes the mean squared error and therefore provides a better quantifier. 


In addition to deriving these theoretical results, we compared the performance of Continous Sweep to other quantifiers. 
In Section $4$ we presented three simulation studies. In the first study, the distributions of the true and positive rate are known. In this case, the results show that Continuous Sweep is always better than Median Sweep. In the second and third simulation study, the distributions of the true and positive rate need to be estimated from the training data. In simulation study 2, the rates are estimated using a correctly specified distribution. Moreover, in simulation study 3, we also use a misspecified distribution to estimate the rates. We demonstrated that Continuous Sweep has a lower mean squared error than Median Sweep in most situations, except when the estimated distribution is misspecified (in simulation study 3). Overall, these results present a substantial contribution in the search of better quantifiers in the group \emph{Classify, Count, and Correct}.

Furthermore, in the three simulation studies we also compared Continuous Sweep with two quantifiers in the group \emph{Distribution Matching}, the DyS and SLD quantifiers. The quantifier SLD did generally well and (slightly) outperformed Continuous Sweep. Continuous Sweep turned out to be competitive to DyS. 

In Section $5$, we used the LeQua2022 competition data to compare the performance of Continuous Sweep with other quantifiers. In the data, one training data set is available and $5000$ test sets. After the competition the true prevalences of the test sets were made public. Therefore, we can compare the performance of the quantifiers on these $5000$ test sets. On average, Continuous Sweep outperformed Median Sweep, performed similar as the DyS quantifier, but performed worse than SLD. However, checking the performance on individual test sets, we see that Continuous Sweep performed better than SLD on $2188$ test sets and better than DyS on $2508$ test sets (cf. Figure \ref{fig:lq-whole}).

To summarize, we showed that Continuous Sweep generally outperforms Median Sweep and is competitive to SLD en DyS when applied to the LeQua2022 competition data. \cite{schumacher2021} showed that Median Sweep performs very well among many other datasets and, in some of them, outperforms DyS and SLD. Combining the results from \cite{schumacher2021} with the results of our paper could imply that Continuous Sweep outperforms DyS and SLD in these datasets as well. Future work could focus on investigating this implication and make more general statements about the comparison between Continuous Sweep and SLD and DyS. 

As a side note, we like to remark that the comparative study in \cite{esuli2023learning} showed that Median Sweep underperforms compared to most quantifiers. However, that study used an older version of Python package QuaPy \cite{moreo2021quapy} with a bug in the code for Median Sweep. In the most recent version of the QuaPy package \cite{moreo2021quapy}, the bug has been fixed. The comparative study needs to be updated to provide new results \cite{esuli2023learning}.

As future work, we identify three main directions to further investigate, based on either the limiting assumptions that we have made throughout the paper or on the limiting construction of the studies. First, we assumed in our derivations of the bias and variance of Continuous Sweep, the distributions of the true and false positive rates to be known. In practice, both rates have to be estimated using the training data, which adds an extra source of variance to the computations. In most applications, we have to take this extra source of variance into account and therefore we need to extend the derivations. In future research, the derivations could be extended by loosening this assumption such that the parameters of the distributions are being estimated using training data. Therefore the bias and variance of the estimated true and false positive rates need to be incorporated. Subsequently, the optimal threshold can be adapted using such new expressions. 

A second direction regards an obstacle in comparing quantifiers for which many quantifiers use the input of a classifier. Instead of simply evaluating a quantifier, a classifier-quantifier pair is evaluated. Moreover, optimizing a classifier for one quantifier could work differently than optimizing a classifier for a different quantifier. Similarly, one classifier might be optimal for quantifier A, while another classifier is optimal for quantifier B. This combination of classifiers and quantifiers makes an honest comparison of quantifiers difficult. The direct quantification learners do not need to train an underlying classifier. This could in principle be an advantage. Although, comparative studies do not show that these quantifiers outperform others \cite{schumacher2021}. 

The third and final limiting assumption in this paper is that Continuous Sweep uses parametric distributions to estimate the true and false positive rate. For example, in the simulation study, normal and skew-normal distributions have been used to estimate the rates, as well as a skew-normal distribution in the LeQua2022 application. We like to point out that any parametric distribution can be used to estimate these rates. In applications it is important to find an accurate distribution, as misspecification leads to decreased performance. If accurate distributions cannot be obtained using parametric distributions, continuous non-parametric distributions obtained from kernel density estimators can be used to estimate the true and false positive rates. 

In conclusion, we developed a new quantifier called Continuous Sweep that excels in the group \emph{Classify, Count, and Correct}. Furthermore, we derived analytical expressions of its bias and variance. These expressions contribute to the research gap of performing more theoretical analysis in quantification learning \cite{gonzalez2017quantification}. Based on the expressions, we were able to derive an optimal value for the threshold selection policy. The optimal value minimizes the variance given the distribution functions of the true and false positive rates. The resulting optimal Continuous Sweep outperforms Median Sweep and is competitive with state-of-the-art quantifiers DyS and SLD on real-world data. We believe that Continuous Sweep is a quantifier with great potential for which further comparisons with existing quantifiers are needed and extensions to more general settings are possible.

\section*{Acknowledgements}
The research of the first author is supported by LUXs Data Science B.V., Leiden.

\newpage
\bibliography{references}
\end{document}